%% file: main.tex
\documentclass{article}

\usepackage{microtype}
\usepackage{graphicx}
\usepackage{subcaption}
\usepackage{booktabs} 
\usepackage{colortbl}
\usepackage{pgf}
\usepackage{hyperref}

\usepackage[preprint]{icml2026}

\usepackage{amsmath}
\usepackage{amssymb}
\usepackage{mathtools}
\usepackage{amsthm}
\usepackage{multirow}

\usepackage[capitalize,noabbrev]{cleveref}

\theoremstyle{plain}
\newtheorem{theorem}{Theorem}[section]

\theoremstyle{definition}

\theoremstyle{remark}

\usepackage[textsize=tiny]{todonotes}

\definecolor{color1}{rgb}{0.1,0.7,0.8} 
\definecolor{color2}{rgb}{0.9,0.1,0.1} 
\definecolor{color3}{rgb}{0.7,0.3,0.7} 
\definecolor{color4}{rgb}{0.3,0.3,0.7} 
\definecolor{color5}{RGB}{8, 102, 3} 
\definecolor{color6}{rgb}{0.53, 0.66, 0.42} 

\usepackage{booktabs}
\usepackage{tabularx}
\usepackage{array}
\usepackage[table]{xcolor}
\usepackage{subcaption}
\usepackage{caption}
\usepackage{enumitem}
\usepackage{amsmath}

\definecolor{sectiongray}{gray}{0.92}
\usepackage{xspace}
\newcommand{\OURS}{\textbf{\texttt{ROCKET}}\xspace}

\icmltitlerunning{\OURS: Residual-Oriented Multi-Layer Alignment for Spatially-Aware Vision-Language-Action Models}

\begin{document}

\twocolumn[
  \icmltitle{ROCKET: Residual-Oriented Multi-Layer Alignment for Spatially-Aware Vision-Language-Action Models}




\begin{icmlauthorlist}
  \icmlauthor{Guoheng Sun\textsuperscript{*}}{umd}
  \icmlauthor{Tingting Du}{wisc}
  \icmlauthor{Kaixi Feng}{umd}
  \icmlauthor{Chenxiang Luo}{cityu} \\
  \icmlauthor{Xingguo Ding}{sps}
  \icmlauthor{Zheyu Shen}{umd}
  \icmlauthor{Ziyao Wang}{umd}
  \icmlauthor{Yexiao He}{umd}
  \icmlauthor{Ang Li\textsuperscript{\ensuremath{\dagger}}}{umd}
\end{icmlauthorlist}

\icmlaffiliation{umd}{University of Maryland, College Park}
\icmlaffiliation{wisc}{University of Wisconsin, Madison}
\icmlaffiliation{cityu}{City University of Hong Kong}
\icmlaffiliation{sps}{St.\ Paul's School}

\begin{center}
$^{*}$~ghsun@umd.edu \qquad $^{\dagger}$~angliece@umd.edu
\end{center}



  \icmlcorrespondingauthor{Ang Li}{angliece@umd.edu}

\icmlkeywords{Spatial Reasoning; Vision-Language-Action Model; Representation Alignment}

  \vskip 0.3in
]



\printAffiliationsAndNotice{}  

\input{sections/0_abstract}
\input{sections/1_introduction}

\input{sections/2_related_work}
\input{sections/3_motivations}

\input{sections/4_method}

\input{sections/5_experiment}
\input{sections/6_discussion}

\input{sections/7_conclusion}

\clearpage

\section*{Impact Statement}
This paper presents work whose goal is to advance the field of Machine
Learning and Embodied AI. There are many potential societal consequences of our work, none
which we feel must be specifically highlighted here.

\bibliography{references}
\bibliographystyle{icml2026}

\newpage
\appendix
\onecolumn

\input{sections/8_appendix}

\end{document}

%% file: sections/0_abstract.tex
\begin{abstract}
Vision-Language-Action (VLA) models enable instruction-following robotic manipulation, but they are typically pretrained on 2D data and lack 3D spatial understanding. An effective approach is representation alignment, where a strong vision foundation model is used to guide a 2D VLA model. However, existing methods usually apply supervision at only a single layer, failing to fully exploit the rich information distributed across depth; meanwhile, naïve multi-layer alignment can cause gradient interference.
We introduce \textbf{\OURS}, a residual-oriented multi-layer representation alignment framework that formulates multi-layer alignment as aligning one residual stream to another. Concretely, \textbf{\OURS} employs a shared projector to align multiple layers of the VLA backbone with multiple layers of a powerful 3D vision foundation model via a layer-invariant mapping, which reduces gradient conflicts. We provide both theoretical justification and empirical analyses showing that a shared projector is sufficient and outperforms prior designs, and further propose a Matryoshka-style sparse activation scheme for the shared projector to balance multiple alignment losses. Our experiments show that, combined with a training-free layer selection strategy, \textbf{\OURS} requires \textit{only about 4\% of the compute budget while achieving 98.5\% state-of-the-art success rate on LIBERO}. We further demonstrate the superior performance of \textbf{\OURS} across LIBERO-Plus and RoboTwin, as well as multiple VLA models. The code and model weights can be found at \url{https://github.com/CASE-Lab-UMD/ROCKET-VLA}.
  
\end{abstract}

%% file: sections/1_introduction.tex
\section{Introduction}
Robotic manipulation requires models to integrate semantic intent with precise spatial reasoning about objects, contacts, and constraints in the physical world. Recent vision-language-action (VLA) models achieve strong instruction-following by leveraging large vision-language backbones, yet these backbones are predominantly pretrained on 2D imagery and often fail to produce representations with stable 3D structure~\cite{pmlr-v229-zitkovich23a,kim2024openvla,li2025spatial}. As a result, VLAs may succeed on familiar scenarios but generalize poorly when manipulation depends on accurate geometry, viewpoint changes, or fine-grained spatial relations~\cite{pmlr-v164-shridhar22a,pmlr-v229-huang23b}.

To bridge this gap, prior work has pursued three main strategies to equip VLAs with spatial reasoning: (1) augmenting inputs with explicit 3D signals such as depth maps or point clouds~\cite{pmlr-v205-shridhar23a}; (2) recovering 3D structure from 2D observations via depth estimation~\cite{ranftl2021dpt}; and (3) implicitly aligning intermediate VLA representations to spatial features produced by pretrained 3D foundation models~\cite{li2025spatial}. Compared with the first two approaches, which rely on additional sensors or external depth estimators, implicit alignment is typically more inference-efficient and easier to scale, making it a particularly practical direction for improving spatial grounding in VLAs.

In \textit{representation alignment} for improving spatial reasoning, a common practice is to supervise a VLA model’s visual-token representations to match those of a strong 3D vision foundation model. Most existing methods apply alignment at a single layer~\cite{yu2024representation, huang2025mllms,li2025spatial,wang2024reconstructive}. However, prior work has found that performance is highly sensitive to the chosen layer, and the optimal layer often varies across tasks and data distributions (as shown in Table~\ref{tab:alignment_comparison}). Since it is difficult to predict which layer contains the most appropriate visual representations \emph{a priori}, single-layer alignment typically relies on inefficient post-hoc search. Meanwhile, representations in deep residual networks evolve with depth and encode information at multiple levels~\cite{skean2025layer,lee2025vlsi}, and prior feature distillation studies similarly suggest benefits from leveraging teacher signals across depths~\cite{sun2019patient,chen2021distilling,gong-etal-2025-beyond}. A natural alternative is multi-layer alignment, which can exploit hierarchical spatial cues from shallow to deep layers. However, in the VLA setting, directly extending classic distillation by aligning multiple depths with \emph{independent} projectors per layer often backfires: separate projectors tend to learn inconsistent mappings between representation spaces, leading to severe gradient interference and ultimately causing performance collapse.

To address gradient interference, we propose \textbf{\OURS}, a stable and high-performance multi-layer alignment strategy that formulates multi-layer alignment as aligning one residual stream to another. From a residual-dynamics perspective, we attribute the failure of naive multi-projector designs to gradient conflicts, and show that using a shared projector promotes gradient coherence. Concretely, \OURS learns a cross-layer shared mapping via a lightweight projector that aligns residual streams from multiple layers to a target residual stream. This shared alignment helps reduce gradient interference, enabling fast convergence while continuing to improve late-stage performance.
We further observe that, in multi-layer alignment, shallower layers converge more easily; to prevent shallow-layer alignment from dominating the shared projector, \OURS introduces a Matryoshka-style~\cite{kusupati2022matryoshka} sparse activation scheme in which deeper layers activate more projector parameters. Such a scheme balances alignment losses across depths, encouraging shallow layers to quickly capture common local cues while allowing deeper layers to further refine global information. We also analyze layer selection and show that a simple training-free rule yields stable gains. Experiments across multiple models and datasets demonstrate that \OURS improves spatial reasoning in VLA models and consistently outperforms the single-layer alignment paradigm. Our contributions are summarized as follows:
\begin{itemize}[leftmargin=*, itemsep=0pt, topsep=2pt, parsep=2pt]
    \item We propose \OURS, a multi-layer alignment framework that injects 3D spatial reasoning into 2D-pretrained VLA models, featuring a layer-invariant shared projector to avoid gradient conflicts and a Matryoshka-style sparse activation scheme to balance alignment losses across depths.
    \item We develop theoretical and empirical analyses that explain when and why prior multi-layer alignment fails due to gradient interference, and show that a single shared projector is both sufficient and consistently superior to layer-wise projectors.
    \item By leveraging complementary spatial cues across depths, \textbf{\OURS} achieves state-of-the-art performance on LIBERO and generalizes across multiple datasets and VLA backbones.
    \item \textbf{\OURS} is compute-efficient and converges quickly, reaching state-of-the-art (SOTA) performance with $\sim$4\% of the training compute required by prior SOTA methods, while remaining effective in data-limited regimes relevant to embodied settings.
\end{itemize}

%% file: sections/2_related_work.tex
\section{Related Work}\label{sec:related_work}

\paragraph{Vision-Language-Action Models and Spatial Grounding}
VLA models integrate visual perception and language understanding to generate embodied robot actions. Representative works include \textbf{OpenVLA}~\cite{kim2024openvla} and \textbf{OpenVLA-OFT}~\cite{kim2025fine} built upon Prismatic-7B~\cite{karamcheti2024prismatic}, as well as \textbf{PI0}~\cite{black2024pi_0} and \textbf{PI0.5}~\cite{black2025pi05} based on PaliGemma~\cite{beyer2024paligemma}, which primarily rely on 2D Vision-Language Models (VLMs). These models often lack geometric understanding of the 3D world. To address this, several works augment VLA models with 3D information, including point-cloud inputs~\cite{bhat20253d,sun2025geovla} or depth-aware encodings~\cite{qu2025spatialvla}. Meanwhile, recent 3D foundation models such as \textbf{VGGT}~\cite{wang2025vggt}, \textbf{Depth Anything}~\cite{lin2025depth}, and $\pi^3$~\cite{wang2025pi} provide strong spatial representations that can offer rich supervision signals for 2D vision backbones.

\vspace{-0.2cm}

\paragraph{Representation Alignment for Spatial Supervision}
Representation alignment (feature distillation) is widely used to transfer spatial knowledge from pretrained teachers to VLM/VLA models. Most prior work aligns student visual-token features from one layer to a fixed teacher layer. For example, \textbf{Spatial Forcing (SF)}~\cite{li2025spatial} and \textbf{GLaD}~\cite{guo2025glad} align \emph{deep} vision representations to VGGT features; \textbf{REPA}~\cite{yu2024representation} aligns \emph{shallow} diffusion-model~\cite{Peebles2022DiT} representations to DINOv2-Large~\cite{oquab2023dinov2}; and \textbf{3DRS}~\cite{huang2025mllms} supervises the \emph{final-layer} visual tokens using VGGT. Although effective, these methods typically supervise a single layer and do not fully exploit the representations across depth. Moreover, as shown in Table~\ref{tab:alignment_comparison}, the choice of which layer to align is often inconsistent across methods and depends on post-hoc empirical results.

\input{tables/related_work_comparison}

\paragraph{Multi-layer Distillation and Optimization Challenges}
Early studies on knowledge distillation have shown that supervising multiple intermediate layers can lead to more effective knowledge transfer. Methods such as \textbf{Patient Knowledge Distillation}~\cite{sun2019patient}, \textbf{ReviewKD}~\cite{chen2021distilling}, and recent feature-dynamics-based approaches~\cite{gong-etal-2025-beyond} demonstrate the benefits of multi-layer alignment in deep networks. 
To enable supervision at intermediate layers, prior methods commonly attach layer-wise \emph{projectors}~\cite{chen2022improved,miles2024understanding,miles2024vkd} that reconcile mismatched feature spaces before computing alignment losses.
However, extending multi-layer distillation to VLA models introduces additional challenges: gradient interference can arise among the separate projectors, which can degrade alignment quality. 

%% file: tables/related_work_comparison.tex
\begin{table}[htb]
    \centering
    \vspace{-0.2cm}
    \caption{Comparison of representation alignment methods for spatial supervision.
    OpenVLA -- \textbf{24 / 32}: with a 32-layer backbone, using the 24th-layer features for alignment.}
    \label{tab:alignment_comparison}
    \footnotesize
    \setlength{\tabcolsep}{11pt}
    \begin{tabularx}{\linewidth}{
        lrr
    }
        \toprule
        \textbf{Method} &
        \textbf{Student} &
        \textbf{Teacher} \\
        \midrule
        SF 
        & OpenVLA -- \textbf{24 / 32}
        & VGGT -- 24 / 24 \\
        REPA
        & DiT -- \textbf{8 / 24}
        & DINOv2 -- 24 / 24 \\
        3DRS
        & LLaVA -- \textbf{32 / 32}
        & VGGT -- 24 / 24 \\
        \bottomrule
    \end{tabularx}
\vspace{-0.5cm}
\end{table}


%% file: sections/3_motivations.tex
\section{Theoretical Framework: Residual Dynamics and Gradient Coherence}\label{sec:motivatios}

\subsection{Background}
\label{sec:problem_setup}

\paragraph{VLA as student and manipulation objective.}
For ease of understanding, we view representation alignment from the perspective of feature knowledge distillation. 
We consider a pretrained VLA policy as the \emph{student} model.
Given an observation (e.g., multi-view images) $I$ and an instruction $P$, the student induces an action policy $\pi_{\theta}$.
One possible instantiation is an autoregressive policy~ \cite{kim2024openvla} that predicts an action sequence $A=\{a_t\}_{t=1}^{K}$:
\begin{equation}
a_t \sim \pi_{\theta}\!\left(\,\cdot \mid I, P, a_{<t}\right),
\label{eq:vla_policy}
\end{equation}
while our method is agnostic to the specific action parameterization (e.g., continuous regression or discretized action tokens).
The student is trained with an action learning loss
\begin{equation}
\mathcal{L}_{\mathrm{action}} = \mathcal{L}\!\left(A, A^{gt}\right),
\label{eq:action_loss}
\end{equation}
where $A^{gt}$ denotes the ground-truth action sequence from demonstrations.

\paragraph{3D vision foundation model as teacher.}
We assume access to a \emph{frozen} 3D-aware vision foundation model as the \emph{teacher}, which provides geometry-grounded \emph{patch token} representations.
Such models (e.g., VGGT~\cite{wang2025vggt}, $\pi^3$~\cite{wang2025pi}, Depth Anything 3~\cite{lin2025depth}) learn spatially consistent features from monocular or multi-view images and can serve as strong supervision signals for injecting 3D cues into a VLA student's visual representations.

\paragraph{Single-layer representation alignment.}
Single-layer alignment applies supervision on the \emph{visual tokens} of a student model by matching them to a teacher's intermediate representations at a selected depth to improve spatial awareness.
There has been a line of prior work adopting single-layer intermediate feature matching, which is simple yet effective~\cite{li2025spatial,guo2025glad}.
Let $h^{S}_{s}(\theta)\in\mathbb{R}^{d_S}$ and $h^{T}_{\tau}\in\mathbb{R}^{d_T}$ denote the student/teacher vision representations at student layer $s$ and teacher layer $\tau$.
SF aligns one student layer $s^\star$ to one teacher layer $\tau^\star$ (i.e., $\mathcal{S}=\{s^\star\}$, $\mathcal{T}=\{\tau^\star\}$) via a lightweight projector $p_{\phi}$ and optimizes
\begin{equation}
\mathcal{L}_{\mathrm{align}}
=
\ell\!\left(p_{\phi}\!\left(h^{S}_{s^\star}(\theta)\right),\, h^{T}_{\tau^\star}\right),
\label{eq:sf_align_loss}
\end{equation}
where $\ell(\cdot,\cdot)$ can be any similarity-based loss.

Accordingly, most representation-alignment methods optimize a joint objective:
\begin{equation}
\mathcal{L}_{\mathrm{total}}
\;=\;
\mathcal{L}_{\mathrm{action}}
\;+\;
\lambda\,\mathcal{L}_{\mathrm{align}},
\label{eq:sf_total_loss}
\end{equation}
where $\lambda$ balances task learning and representation alignment.

\subsection{The Gradient-Conflict of Multi-Layer Alignment}
\label{sec:multi-layer}

\paragraph{From single-layer to multi-layer alignment: richer cues, but no gains in practice.}
Despite its effectiveness, single-layer alignment has limitations: (i) \emph{single-point} alignment requires post-hoc layer selection, and (ii) it fails to use complementary cross-layer signals.
A natural extension is to align \emph{multiple} layers simultaneously, since spatial cues may be distributed across depth and multi-layer distillation is effective in other settings~\cite{skean2025layer,lee2025vlsi}.
Concretely, given ordered layer lists $\mathcal{S}=\{s_i\}_{i=1}^{N}$ (student) and $\mathcal{T}=\{\tau_i\}_{i=1}^{N}$ (teacher), previous multi-layer alignment methods use layer-specific projectors $\{p_{\phi_i}\}$:
\begin{equation}
\mathcal{L}_{\mathrm{align}}
=
\frac{1}{N}\sum_{i=1}^{N}
\ell\!\left(
p_{\phi_i}\!\left(h^{S}_{s_i}(\theta)\right),\,
h^{T}_{\tau_i}
\right).
\label{eq:naive_multilayer}
\end{equation}
However, this prior-style design~\cite{sun2019patient,chen2021distilling,gong-etal-2025-beyond} \emph{hurts} downstream VLA manipulation performance (as shown in Fig.~\ref{fig:train_stage_perf}).

\paragraph{Residual-dynamical view: multi-layer alignment should be cone-to-cone, yet learned projectors decouple.}
To understand why multi-layer alignment fails, we view both student and teacher backbones as residual dynamical systems~\cite{weinan2017proposal,chang2017multi}.
Let $h_l\in\mathbb{R}^{d}$ be the student residual representation at layer $l$:
\begin{equation}
h_{l+1} \;=\; h_l \;+\; F_l\!\left(h_l;\theta_l\right),
\qquad l=0,\dots,L-1,
\label{eq:residual_stream}
\end{equation}
with $\Delta_l:=F_l(h_l;\theta_l)$.
Empirically, the hidden states within each model tend to converge into the same feature space~\cite{gromov2024unreasonable,he2024matters} (Fig.~\ref{fig:cone_effect}), consistent with the ``cone effect''~\cite{gao2019representation}.
This suggests multi-layer alignment should learn a consistent \emph{cone-to-cone} mapping from the student residual stream to the teacher residual stream.
Yet, under naive multi-layer alignment, the learned projectors $\{p_{\phi_i}\}$ \emph{diverge} and become nearly orthogonal in parameter space (Please refer to Appendix~\ref{app:projector_similarity}), indicating that different depths follow inconsistent alignment paths rather than sharing a coherent mapping.

\begin{figure}[htbp]
    \centering
    \includegraphics[width=0.5\textwidth]{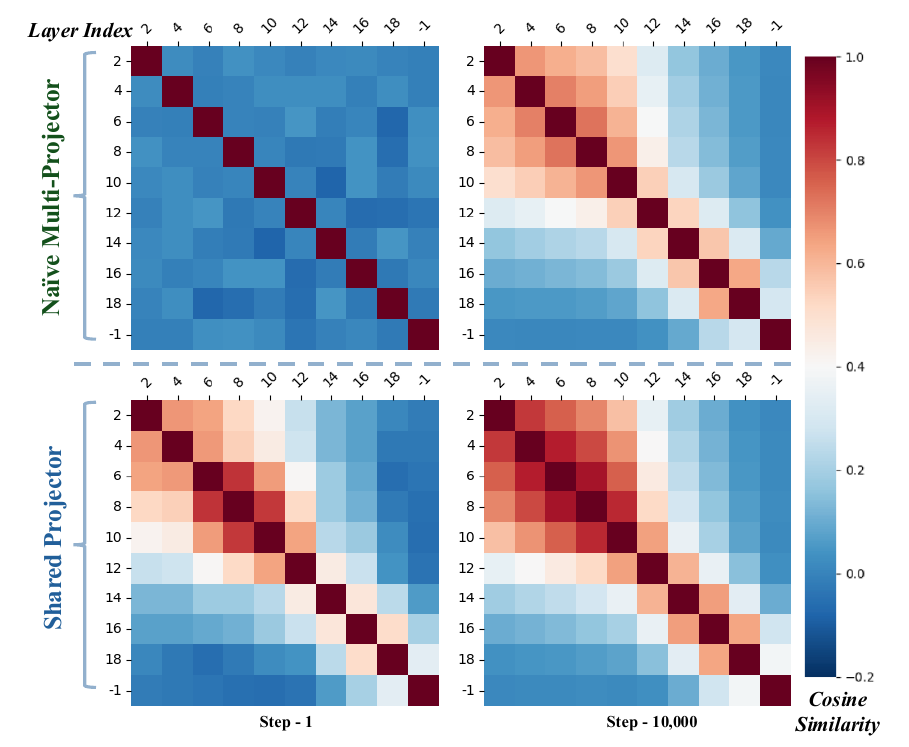} 
\caption{Cosine similarity between gradients induced by different alignment losses in the VLA's shallow layers.
\textcolor{red!70!black}{\textbf{Red}} indicates gradient coherence, while \textcolor{blue!70!black}{\textbf{Blue}} indicates gradient interference. Detailed results are provided in Fig.~\ref{fig:grad_sim_early} and Fig.~\ref{fig:grad_sim_late}.}
    \label{fig:grad}
    \vspace{-0.3cm}
\end{figure}

\paragraph{Corollary: early-layer updates are a superposition of future local distillation gradients.}
Why do inconsistent projectors matter for optimization?
Our key observation is that in a Pre-LN residual stream, back-propagation through depth is close to identity when residual updates are small, so an early-layer representation receives an \emph{additive} mixture of gradient signals coming from \emph{all} aligned future layers.
Formally, Appendix~\ref{sec:theory_shared_projector} derives (Corollary~1) that for an early layer $i$,
\begin{equation}
\begin{aligned}
\nabla_{h_i}\mathcal{L}_{\mathrm{align}}
&=\sum_{l\in\mathcal{S}:l\ge i}\alpha_l\,
\underbrace{J_{p_{\phi_l}}(h_l)^\top g_l}_{=:~v_l}
+\mathcal{E}_i, \\
g_l &:= \nabla_{z_l}\ell(z_l,t_l), \qquad
z_l=p_{\phi_l}(h_l).
\end{aligned}
\label{eq:motivation_superposition}
\end{equation}
where $\mathcal{E}_i$ is a \emph{transport error} controlled by residual-smallness and $\alpha$ denotes the layer-wise weight.
Thus, \textit{whether multi-layer alignment helps hinges on whether $\{v_l\}$ add constructively or cancel.}

\paragraph{Proposition: Jacobian-induced gradient interference under multiple independent projectors.}
Let $G \approx \sum_{l\ge i}\alpha_l v_l$ denote the superposed gradient (ignoring $\mathcal{E}_i$).
Its squared norm expands as
\begin{equation}
\|G\|_2^2
=
\sum_{l}\alpha_l^2\|v_l\|_2^2
+
\sum_{a\neq b}\alpha_a\alpha_b\,\langle v_a, v_b\rangle,
\label{eq:motivation_G_expand}
\end{equation}
where the cross terms determine constructive vs.\ destructive interference.
Crucially, each cross term is a bilinear form on teacher-side error signals:
\begin{equation}
\langle v_a, v_b \rangle
=
g_a^\top
\underbrace{\left( J_{p_{\phi_a}}(h_a)\,J_{p_{\phi_b}}(h_b)^\top \right)}_{=:~\mathcal{M}_{ab}}
g_b.
\label{eq:motivation_interference}
\end{equation}
With \emph{multiple independent} projectors, $\mathcal{M}_{ab}$ is \emph{not structurally coupled} across $(a,b)$:
the interaction matrix $J_{p_{\phi_a}}(h_a)J_{p_{\phi_b}}(h_b)^\top$ need not be symmetric or positive semidefinite (PSD), and can vary arbitrarily across layers.
As a result, $\langle v_a,v_b\rangle$ may be weak, sign-unstable, or even destructive.
Empirically, we observe that cross-layer alignment gradients remain poorly aligned (small pairwise cosine similarity) throughout training (Shown in the upper part of Fig.~\ref{fig:grad}), consistent with gradient conflict under unconstrained cross terms.

\subsection{Motivation}
\label{sec:motivation}

\paragraph{Shared projector yields a structured decomposition and improves gradient coherence in practice.}
Multi-layer alignment with multiple \emph{independent} projectors often underperforms due to gradient interference across layers.
In contrast, if we \emph{share a single projector} across layers ($\phi_l\equiv \phi$), Appendix~\ref{sec:theory_shared_projector}
shows that although $\mathcal{M}_{ab}=J_{p_\phi}(h_a)J_{p_\phi}(h_b)^\top$ is not necessarily symmetric or PSD,
it can be decomposed into a \emph{common PSD reference} plus a controlled deviation:
\begin{align}
\label{eq:motivation_shared_decomp}
\begin{aligned}
\langle v_a, v_b\rangle_{\mathrm{share}}
&= g_a^\top \mathcal{M}_{ab} g_b = g_a^\top M g_b + \Delta_{ab}, \\
&\qquad M := JJ^\top \succeq 0 .
\end{aligned}
\end{align}
Moreover, on the error-signal subspace $\mathcal{G}=\mathrm{span}\{g_l\}$ where $M$ is near-isometric, we obtain the
signal-aligned lower bound (Theorem~\ref{thm:shared_head_coherence}):
\begin{equation}
\langle v_a, v_b\rangle_{\mathrm{share}}
\;\ge\;
c\, g_a^\top g_b
\;-\;
\eta c\cdot \frac{\|g_a\|_2^2+\|g_b\|_2^2}{2}
\;-\;
|\Delta_{ab}|
\label{eq:motivation_shared_lowerbound}
\end{equation}
for constants $c>0$ and $\eta\in[0,1)$.
When $g_a^\top g_b$ is typically positive and the deviation terms are small, sharing biases cross-layer gradients
toward constructive interference. 
We validate this prediction by visualizing gradient angles: with a shared projector, cross-layer gradient cosine similarities
increase substantially compared to the independent-projector baseline (Shown in the lower part of Fig.~\ref{fig:grad}),
and this improved coherence translates into significant downstream gains on VLA manipulation tasks
(Fig.~\ref{fig:train_stage_perf}).
This motivates \textbf{\OURS}'s design choice: \emph{share the projector} to reduce gradient conflict and make multi-layer alignment effective.
Full proofs and additional empirical analysis are provided in Appendix~\ref{sec:theory_shared_projector}.

%% file: sections/4_method.tex
\section{Methodology}\label{sec:methodology}


\begin{figure*}[htbp]
    \centering
    \includegraphics[width=0.9\textwidth]{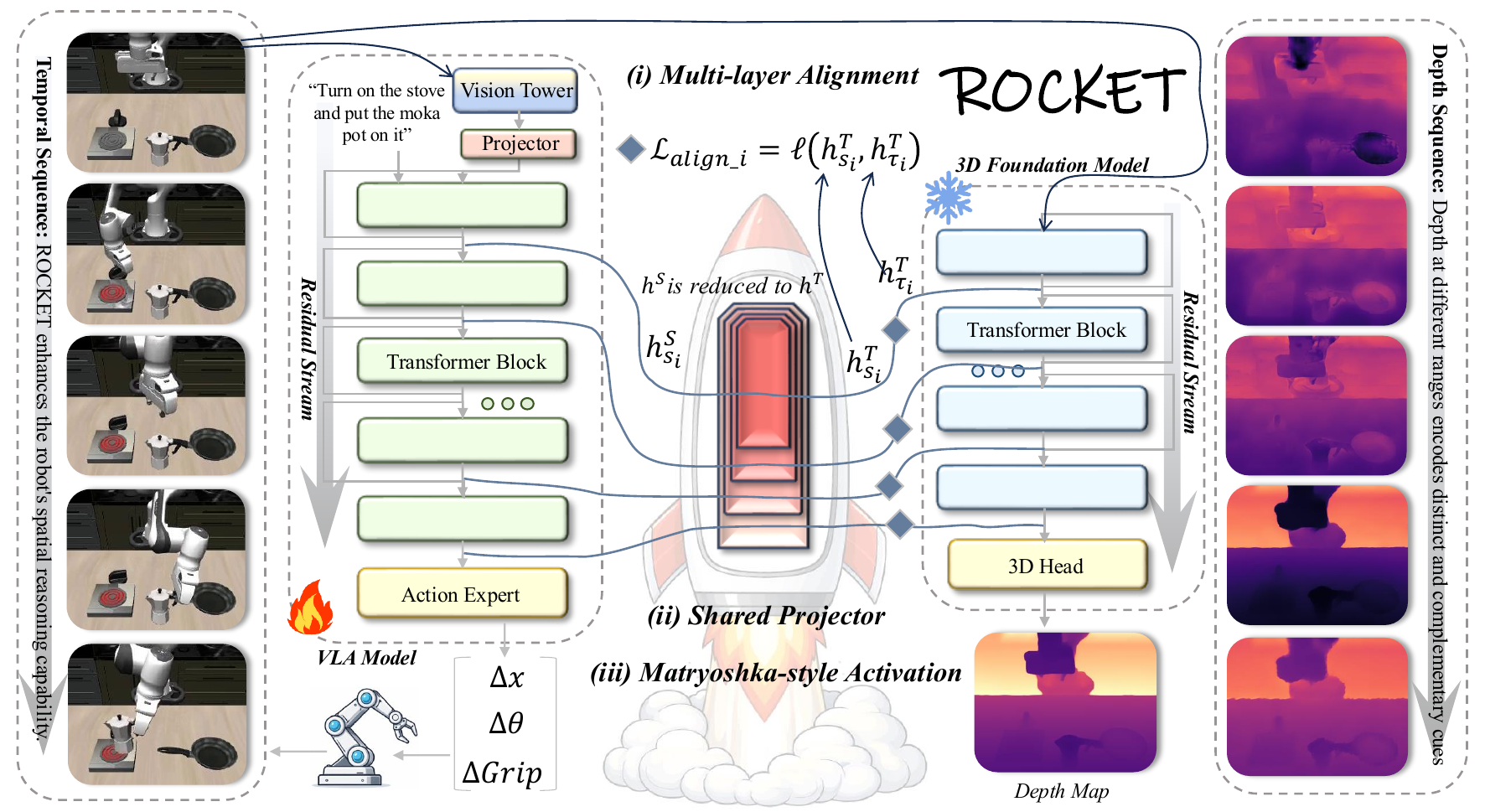} 
    \caption{ Overview of \OURS. On the \textbf{left}, we select a sample from LIBERO to showcase \OURS's performance. On the \textbf{right}, we use the outputs from VGGT layers $\{1, 3, 9, 15, 21\}$ to directly predict the depth map, demonstrating that different layers contain rich 3D information. Some of the illustrations were generated by GPT-4o~\cite{hurst2024gpt}.
    }
    \label{fig:overview}
\end{figure*}

\subsection{Overview of \OURS}
\label{sec:method-rocket}
We propose \OURS, a multi-layer representation alignment method on the residual stream.
As shown in Fig.~\ref{fig:overview}, \OURS has three key designs:

\textbf{(i) Multi-layer representation alignment.}
Single-layer alignment cannot fully leverage the rich semantic information across layers and often relies on post-hoc layer selection.
\OURS aligns multiple layers, and Sec.~\ref{sec:layer_selection} shows that with a simple layer selection strategy, \OURS outperforms the baselines.

\textbf{(ii) Shared projector to reduce gradient conflict.}
Naive multi-layer alignment may cause gradient conflicts across layers.
\OURS uses a \emph{shared} projector to learn a common mapping between two residual streams.

\textbf{(iii) Matryoshka-style activation for balancing depths.}
As indicated by Eq.~\ref{eq:motivation_superposition}, gradients from multiple alignment losses approximately add up, so shallow layers receive denser supervision.
Our experiments also show that shallow layers are easier to align (Please refer to Appendix~\ref{app:shallow_fast} for more details).
To prevent shallow layers from dominating the shared projector, \OURS introduces a Matryoshka-style~\cite{kusupati2022matryoshka} parameter-sparse activation mechanism, which progressively increases the activated parameter fraction from shallow to deep layers.

\subsection{Multi-layer Alignment with Shared Projector}

We follow the multi-layer alignment setting in Sec.~\ref{sec:multi-layer}, with $N$ aligned layer pairs
$\{(s_i,\tau_i)\}_{i=1}^{N}$ and the corresponding hidden states $\{(h^S_{s_i}, h^T_{\tau_i})\}_{i=1}^{N}$.
Crucially, we replace the layer-specific projectors $\{p_{\phi_i}\}_{i=1}^{N}$ in Eq.~\ref{eq:naive_multilayer} with a single shared projector $p_{\Phi}$,
which learns a \emph{shared} student-to-teacher mapping that is consistent across depth.
For the detailed layer selection strategy, please refer to Sec.\ref{sec:num_layer} and Sec.~\ref{sec:layer_selection}.

\begin{table*}[htbp]
\centering

\begin{minipage}[t]{0.36\textwidth}
  \vspace{0pt}
  \centering

  \includegraphics[width=\linewidth]{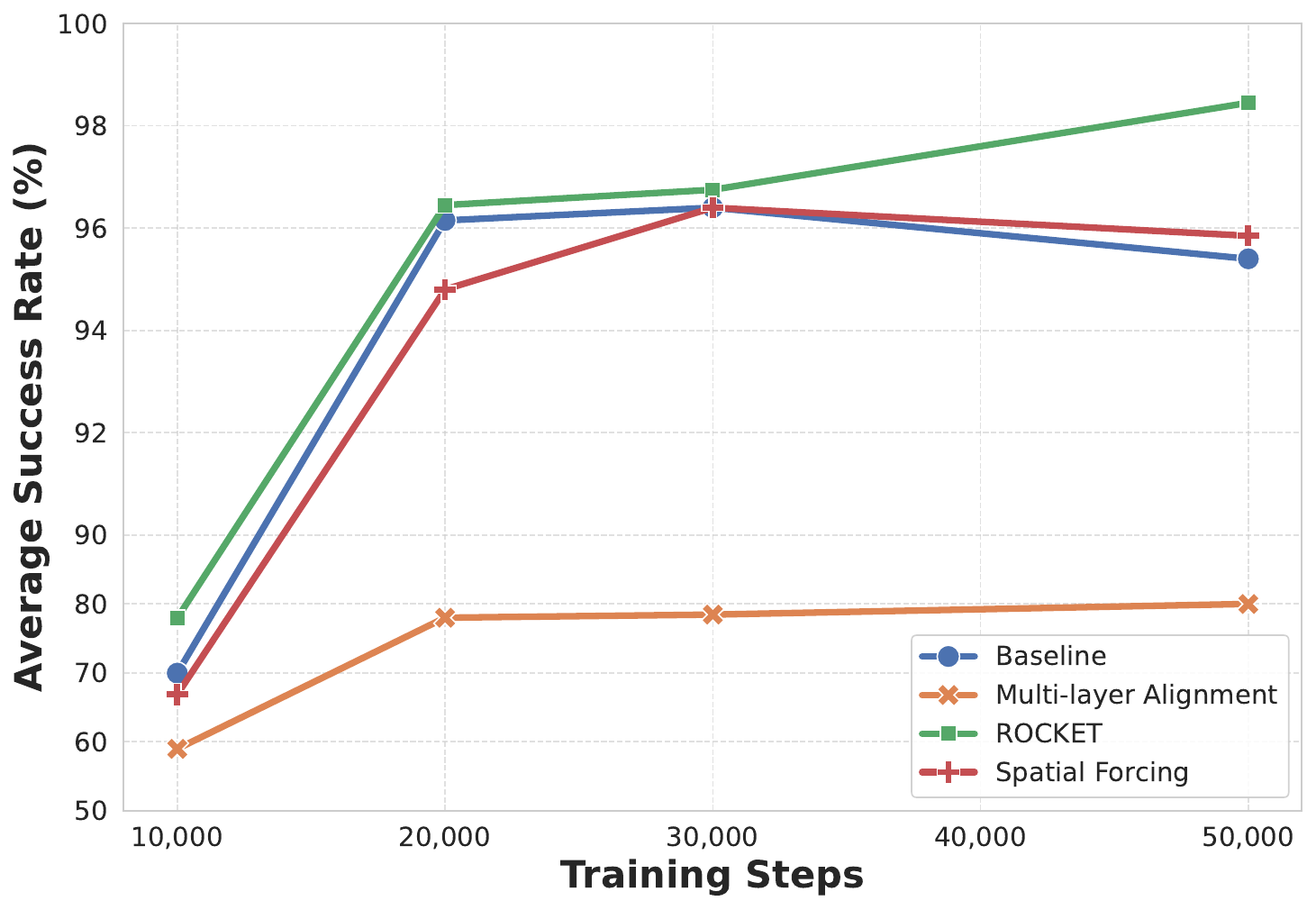}
  \captionof{figure}{Performance on LIBERO across different training stages. See Table~\ref{tab:libero_training_steps} for details.}
  \label{fig:train_stage_perf}

  \vspace{-2pt}

  \includegraphics[width=\linewidth]{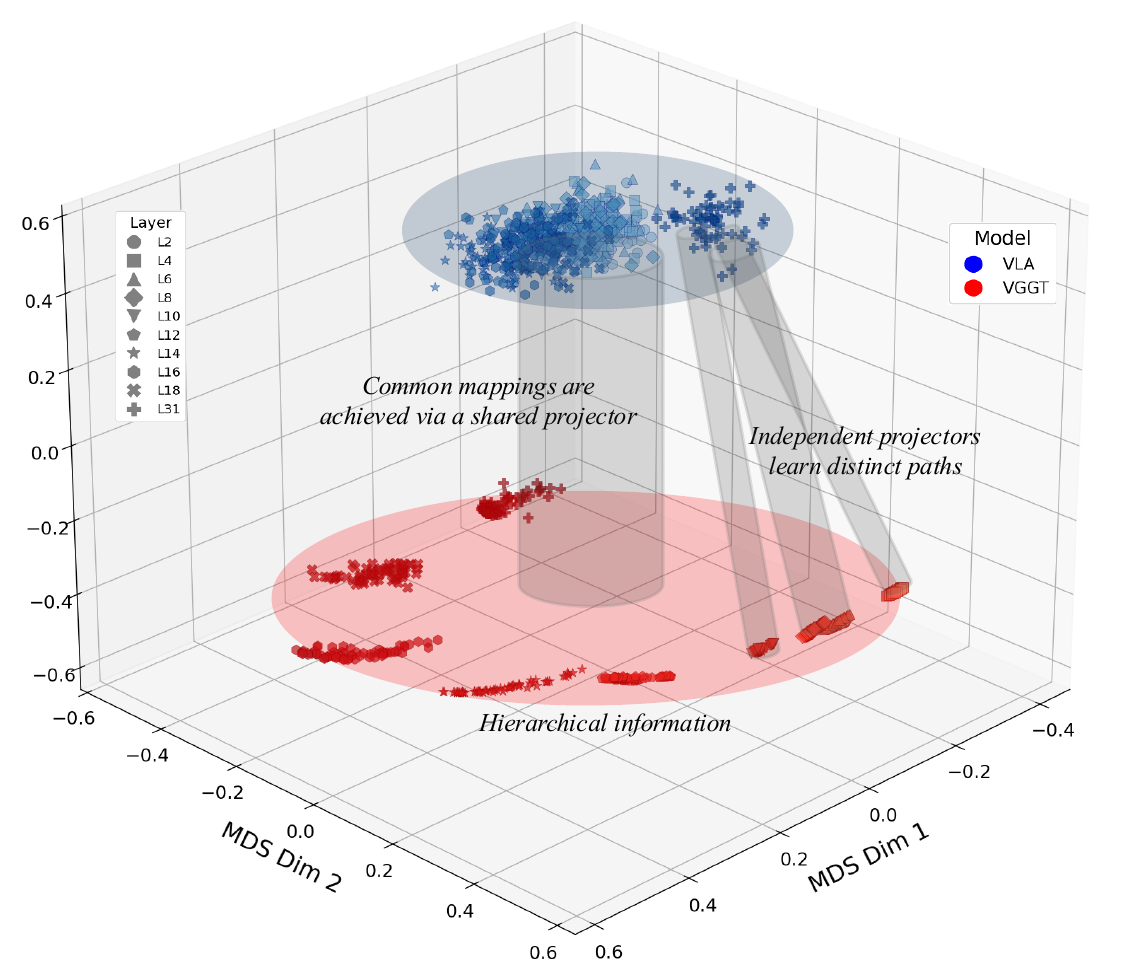}
  \captionof{figure}{A cone-effect perspective on alignment.}
  \label{fig:cone_effect}
\end{minipage}
\hfill
\begin{minipage}[t]{0.60\textwidth}
  \vspace{0pt}
  \raggedleft
  \small
  \setlength{\tabcolsep}{4pt}
    \caption{Comparisons with SOTA VLA models on the LIBERO benchmark.}
    \label{tab:sota_exps_perf}
  \resizebox{\linewidth}{!}{%
\begin{tabular}{l|cccc|c}
\toprule
\textbf{Method} &
\textbf{Spatial} &
\textbf{Object} &
\textbf{Goal} &
\textbf{Long} &
\textbf{Avg.} \\
\midrule

\multicolumn{6}{c}{\textbf{Basic 2D VLA}} \\
\midrule
\rowcolor{black!1}
Octo~\cite{ghosh2024octo}            & 78.9 & 85.7 & 84.6 & 51.1 & 75.1 \\
\rowcolor{black!2}
OpenVLA~\cite{kim2024openvla}        & 84.7 & 88.4 & 79.2 & 53.7 & 76.5 \\
\rowcolor{black!3}
WorldVLA~\cite{cen2025worldvla}      & 87.6 & 96.2 & 83.4 & 60.0 & 81.8 \\
\rowcolor{black!4}
Dita~\cite{hou2025dita}              & 84.2 & 96.3 & 85.4 & 63.8 & 82.4 \\
\rowcolor{black!7}
PI0~\cite{black2024pi_0}             & 96.8 & 98.8 & 95.8 & 85.2 & 94.2 \\
\rowcolor{black!9}
InternVLA~\cite{chen2025internvla}   & 98.0 & 99.0 & 93.8 & 92.6 & 95.9 \\
\rowcolor{black!11}
GR00T N1.6~\cite{gr00tn1_2025}       & 97.7 & 98.5 & 97.5 & 94.3 & 97.0 \\
\rowcolor{black!13}
VOTE~\cite{lin2025vote}              & 98.8 & \textbf{99.8} & 97.6 & 95.6 & 98.0 \\
\midrule
\rowcolor{black!3}
TraceVLA~\cite{zheng2024tracevla}    & 84.6 & 85.2 & 75.1 & 54.1 & 74.8 \\
\rowcolor{black!4}
ThinkAct~\cite{huang2025thinkact}    & 88.3 & 91.4 & 87.1 & 70.9 & 84.4 \\
\rowcolor{black!7}
UniVLA~\cite{bu2025univla}           & 96.5 & 96.8 & 95.6 & 92.0 & 95.2 \\
\rowcolor{black!7}
MemoryVLA~\cite{shi2025memoryvla}
& 98.4 & 98.4 & 96.4 & 93.4 & 96.5 \\
\rowcolor{black!8}
CronusVLA~\cite{shi2025memoryvla}
& 97.3 & 99.6 & 96.9 & 94.0 & 97.0 \\
\rowcolor{black!8}
OpenVLA-OFT~\cite{kim2025fine}       & 97.6 & 98.4 & 97.9 & 94.5 & 97.1 \\

\midrule
\multicolumn{6}{c}{\textbf{3D VLA with explicit 3D inputs}} \\
\midrule
\rowcolor{black!1}
SpatialVLA~\cite{qu2025spatialvla}   & 88.2 & 89.9 & 78.6 & 55.5 & 78.1 \\
\rowcolor{black!8}
GeoVLA~\cite{sun2025geovla}          & 98.4 & 99.0 & 96.6 & 96.6 & 97.7 \\
\rowcolor{black!12}
3D-CAVLA~\cite{bhat20253d}           & 98.2 & \textbf{99.8} & 98.2 & 96.1 & 98.1 \\

\midrule
\multicolumn{6}{c}{\textbf{3D VLA with representation alignment}} \\
\midrule
\rowcolor{black!8}
GLaD~\cite{guo2025glad}              & 95.0 & 97.4 & 94.4 & 89.4 & 94.1 \\
\rowcolor{black!16}
Spatial Forcing~\cite{li2025spatial} & \textbf{99.4} & 99.6 & \textbf{98.8} & 96.0 & \textbf{98.5} \\
\rowcolor{black!16}
\OURS (Ours)                          & 98.2 & \textbf{99.8} & \textbf{98.8} & \textbf{97.0} & \textbf{98.5} \\
\bottomrule
\end{tabular}
  }
\end{minipage}
\vspace{-0.5cm}
\end{table*}

\subsection{Matryoshka-style Sparse Activation}
\label{sec:nested_projector}

We parameterize a single shared projector
$p_{\Phi}:\mathbb{R}^{d_S}\rightarrow\mathbb{R}^{d_T}$ with a maximum internal width $m$.
Specifically, we present a two-layer MLP projector:
\begin{equation}
p_{\Phi}(x)
=
W_2\,\sigma(W_1 x),
W_1\in\mathbb{R}^{m\times d_S},\;
W_2\in\mathbb{R}^{d_T\times m},
\label{eq:shared_projector_full}
\end{equation}
where $\sigma(\cdot)$ is a pointwise nonlinearity (e.g., GELU), and $\Phi=\{W_1,W_2\}$.

For each aligned layer $s_i$, we activate only the \emph{first} $m_i$ hidden channels of the shared projector, where $m_i$ increases monotonically with depth.
By default, we use a simple linear width schedule:
\begin{equation}
\rho_i
=
\rho_{\min} + (\rho_{\max}-\rho_{\min})\cdot \frac{i-1}{\max(N-1,\,1)}, 
m_i \;=\; \left\lceil \rho_i\, m \right\rceil,
\label{eq:nested_width_schedule}
\end{equation}
where $\rho_{\min}=0.2$ and $\rho_{\max}=1.0$, $m$ is the maximum hidden width of the shared projector.
When $N=1$, this reduces to $\rho_1=\rho_{\max}$ (i.e., activating the full projector). Equivalently, define a binary gating vector $g_i\in\{0,1\}^{m}$ as a prefix mask
\begin{equation}
g_i[j] \;=\; \mathbb{I}\!\left(j \le m_i\right),
\qquad j=1,\dots,m,
\label{eq:prefix_gate}
\end{equation}
and apply it to the hidden activation:
\begin{equation}
p_{\Phi}^{(i)}(x)
=
W_2\!\left(g_i \odot \sigma(W_1 x)\right),
\label{eq:nested_projector}
\end{equation}
where $\odot$ denotes element-wise multiplication.
This construction yields a \emph{Matryoshka} (nested) family of projectors
$\{p_{\Phi}^{(i)}\}_{i=1}^{N}$ that share parameters, but expose increasing effective capacity for deeper layers.

\paragraph{Training objective.}
Our final training objective combines the task loss for action prediction with the proposed multi-layer alignment regularizer:
\begin{equation}
\mathcal{L}_{\textbf{\OURS}}
=
\mathcal{L}_{\mathrm{action}}
+
\lambda\,
\frac{1}{N}\sum_{i=1}^{N}
\ell\!\left(
p_{\Phi}^{(i)}\!\left(h^{S}_{s_i}\right),\,
h^{T}_{\tau_i}
\right),
\label{eq:rocket_total_loss}
\end{equation}
where $\mathcal{L}_{\mathrm{action}}$ is defined in Eq.~\eqref{eq:action_loss}, and
$\lambda$ balances task learning and representation alignment.
By default, we use $\ell(a,b)=1-\cos(a,b)$ and set $\lambda=0.5$.

%% file: sections/5_experiment.tex
\section{Main Result}\label{sec:results}

\begin{figure*}[t]
  \centering
  \includegraphics[width=1.01\linewidth]{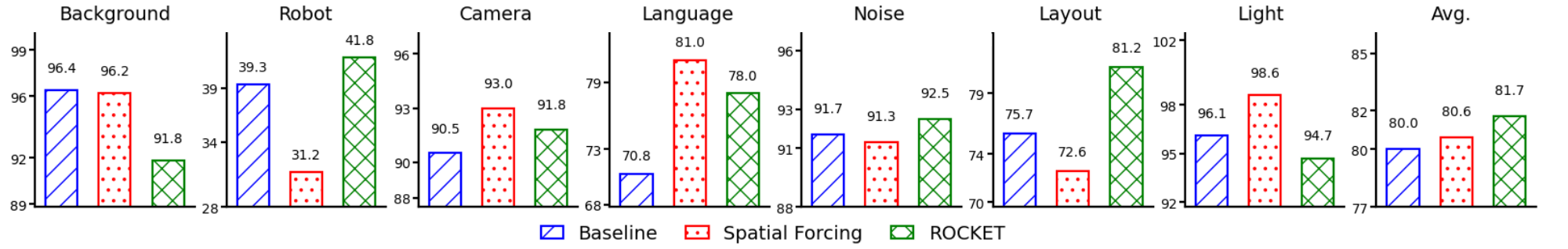}
  \caption{Success rate (\%) on LIBERO-Plus. Results are reported under the seven perturbations defined by LIBERO-Plus. Each column shows the average success rate over four task groups: Spatial, Object, Goal, and Long. Detailed results are provided in Appendix~\ref{app:libero_plus_details}.}
    \vspace{-0.3cm}
  \label{fig:liberoplus}
\end{figure*}

\subsection{Experimental Setup}
\label{sec:exp_setup}

\paragraph{LIBERO and LIBERO-Plus.}
We conduct experiments on \textsc{LIBERO}~\cite{liu2023libero} and \textsc{LIBERO-Plus}~\cite{fei2025libero}. Each benchmark comprises four sub-datasets (task suites). Following the official protocol of \textsc{LIBERO-Plus} as well as the common practice used in \textsc{PI0}/\textsc{PI0.5}, we mix the four sub-datasets during training and evaluate a \emph{single} model on the four corresponding test suites. Unless otherwise specified, we fine-tune the \textsc{OpenVLA-7B} pretrained weights on \textsc{LIBERO} and \textsc{LIBERO-Plus}. We adopt the training setting of \textsc{SF} and \textsc{OpenVLA-OFT}. Specifically, we perform LoRA fine-tuning for $50{,}000$ steps with batch size $32$.
In addition, to compare multiple methods under the \emph{full fine-tuning} regime as in \textsc{PI0.5}, we run full fine-tuning on \textsc{LIBERO} for $30{,}000$ steps with batch size $64$.

\paragraph{RoboTwin 2.0.}
We follow the official training protocol and perform LoRA fine-tuning for $30{,}000$ steps with a batch size of $16$. For evaluation, we follow the \textsc{RoboTwin 2.0}~\cite{chen2025robotwin} evaluation procedure with a random seed of $0$, and run $100$ trials for both the easy and hard task sets. We use the task subset specified in SF for both training and evaluation. For all benchmarks, we report task execution success rate (SR, \%) by default.

\begin{table}[t]
\centering
\caption{
Success rate (\%) of the PI0.5 with different representation alignment strategies
on the LIBERO.
}
\label{tab:pi05_libero_30k}
\small
\begin{tabular}{c|cccc|c}
\toprule
\textbf{Method} & \textbf{Spatial} & \textbf{Object} & \textbf{Goal} & \textbf{Long} & \textbf{Avg.} \\
\midrule
\rowcolor{black!4}
Baseline & 96.4 & 98.2 & 95.0 & 82.2 & 93.0 \\
\rowcolor{black!8}
Spatial Forcing & \textbf{97.8} & 97.8 & 94.4 & 85.8 & 94.0 \\
\rowcolor{black!16}
\textbf{\OURS} & 96.4 & \textbf{98.8} & \textbf{96.6} & \textbf{89.2} & \textbf{95.3} \\
\bottomrule
\end{tabular}

\end{table}

\paragraph{Projector and Alignment Hyperparameters.}
To ensure a fair comparison, we keep the projector architecture \emph{identical across all experiments}. In particular, we use a \emph{single} projector whose structure and dimensionalities exactly match \textsc{SF}.
For the alignment-loss weight $\lambda$ and the number of layers used by the single-layer alignment baseline in \textsc{SF}, we directly use the best configurations reported by their hyperparameter search.
More training hyperparameters and implementation details are provided in Appendix~\ref{app:training_details}.


\subsection{Performance of \OURS Across Different Settings}

\paragraph{LIBERO with OpenVLA-7B.}
\label{subsec:libero_openvla}

As shown in Table~\ref{tab:sota_exps_perf}, we compare representative approaches on LIBERO, including:
(i) standard 2D VLA models, covering both direct fine-tuning of the base VLA and variants that boost performance via additional modules;
(ii) 3D VLA methods with explicit 3D inputs, including augmenting inputs with explicit 3D signals and recovering 3D structure via depth estimation; and
(iii) two single-layer representation alignment baselines.
Results show that \textbf{\OURS} substantially outperforms prior models.
Even compared to Spatial Forcing, \textbf{\OURS} matches its performance while using only $\sim$4\% of the compute budget (please refer to Table~\ref{tab:sota_exps_cost} for more details).

\paragraph{LIBERO with PI0.5.}
\label{subsec:libero_pi05}

On PI0.5, we compare \textbf{\OURS} with the direct fine-tuning baseline and Spatial Forcing under the full fine-tuning setting.
As reported in Table~\ref{tab:pi05_libero_30k}, under the same setting, \textbf{\OURS} improves performance by $2.3\%$. Overall, this improvement suggests that even for smaller-scale models under full fine-tuning, \textbf{\OURS} can effectively inject spatial reasoning capability.

\paragraph{LIBERO-Plus with OpenVLA-7B.}
\label{subsec:liberoplus_openvla}

We further evaluate \textbf{\OURS} on LIBERO-Plus, a robustness benchmark with seven perturbation dimensions, and find that \OURS is more robust under challenging spatial perturbations. As shown in Fig.~\ref{fig:liberoplus}, \OURS achieves the best average success rate (81.7\%), outperforming the baseline (80.0\%). Notably, \OURS shows the most pronounced gains under \emph{Robot} and \emph{Layout} shifts, which are strongly tied to spatial geometry rather than appearance. These improvements suggest that \textbf{\OURS} enhances spatial reasoning instead of relying on positional shortcuts.

\paragraph{RoboTwin 2.0 with PI0.}
\label{subsec:robotwin_pi0}

For bimanual tasks, we evaluate on RoboTwin~2.0 using ALOHA assets~\cite{aldaco2024aloha}.
As shown in Fig.~\ref{fig:robotwin}, \OURS achieves a clear advantage in the Easy setting, while in the Hard setting it is only slightly behind the strongest baseline (finetuned).

\begin{figure*}[t]
  \centering
  \includegraphics[width=1.01\linewidth]{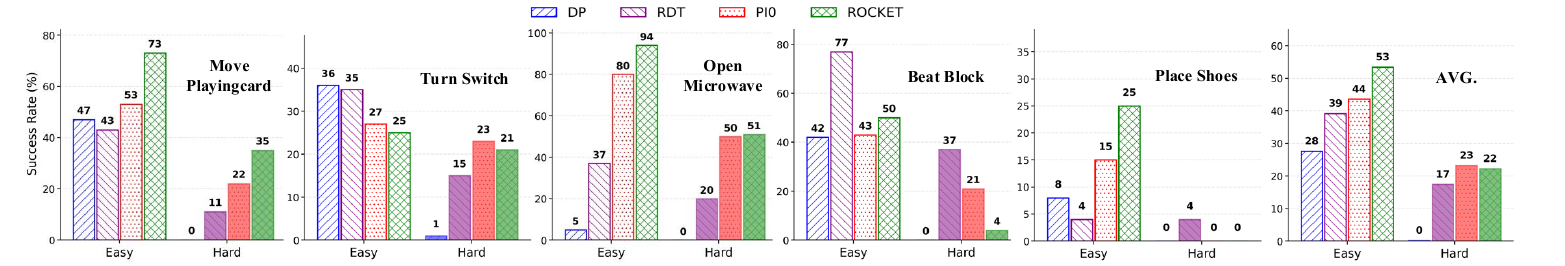}
  \vspace{-0.5cm}
  \caption{Success rate (\%) on RoboTwin~2.0 (ALOHA) under Easy and Hard settings across five bimanual tasks, comparing \OURS with DP~\cite{chi2025diffusion}, RDT~\cite{liu2024rdt}, and PI0~\cite{black2024pi_0}.}
  \label{fig:robotwin}
  \vspace{-0.4cm}
\end{figure*}

\input{tables/SOTA}

%% file: sections/6_discussion.tex
\section{Discussion}\label{sec:discussion}

\subsection{Optimal Number of Alignment Layers}
\label{sec:num_layer}
A key design choice in multi-layer alignment is the number of aligned layer pairs.
To ensure a fair comparison, we fix the projector capacity and study how many layers should be aligned under a fixed-size projector.
Following the strategy in ~\cite{sun2019patient}, we uniformly sample $N$ layers from the VLA LLM backbone and $N$ layers from VGGT, and align them as $N$ layer pairs.
As shown in Table~\ref{tab:openvla_layers}, aligning roughly $8$--$16$ layer pairs yields the best trade-off, and performance saturates or degrades beyond this range.
Therefore, \OURS uses $N=10$ aligned layer pairs by default in all experiments.

\begin{table}[htbp]
    \centering
    \caption{The impact of the number of aligned layers on final performance, evaluated on LIBERO.}
    \label{tab:openvla_layers}
    \small
    \setlength{\tabcolsep}{7.7pt}
    \begin{tabular}{c|cccc|c}
        \toprule
        \textbf{\# Layers} & \textbf{Spatial} & \textbf{Object} & \textbf{Goal} & \textbf{Long} & \textbf{Avg.} \\
        \midrule
        \rowcolor{black!8}
        4  & 98.0 & 98.8 & 93.0 & \textbf{95.2} & 96.3 \\
        \rowcolor{black!16}
        8  & \textbf{98.2} & \textbf{99.8} & 96.4 & 95.0 & \textbf{97.4} \\
        \rowcolor{black!12}
        16 & 97.8 & 99.6 & \textbf{97.6} & 91.6 & 96.7 \\
        \rowcolor{black!4}
        24 & 93.2 & 95.6 & 97.4 & 90.0 & 94.1 \\
        \bottomrule
    \end{tabular}

    \vspace{-0.5cm}
\end{table}

\subsection{Criteria for Layer Selection}
\label{sec:layer_selection}
We investigate several layer selection strategies. We apply the same strategy to select the same layers for both VGGT and VLA. Specifically, we consider:
(i) the \underline{Uniform} sampling strategy in Sec.~\ref{sec:num_layer};
(ii) selecting $10$ layers separately from the \underline{Shallow}, \underline{Middle}, and \underline{Deep} stages;
(iii) \underline{E2M-Last1}: uniformly sampling from early-to-middle layers and additionally including the last layer; and
(iv) two importance-based heuristics: Motivated by the hypothesis that layers with higher input--output cosine similarity are less informative~\cite{gromov2024unreasonable,he2024matters}, we design two similarity-based criteria.
Specifically, the first criterion computes the cosine similarity between the input and output representations aggregated over windows of $10$ consecutive Transformer blocks; \underline{Sim-1-Top} selects the $10$ layers with the highest similarity, while \underline{Sim-1-Last} selects those with the lowest similarity.
The second one computes similarity per block; \underline{Sim-2-Top} selects the $10$ highest-similarity layers, whereas \underline{Sim-2-Last} selects the $10$ lowest-similarity layers.

Results in Table~\ref{tab:ablation_study} show that even without expensive post-hoc search, most reasonable selection strategies provide consistent gains, suggesting that \OURS is robust to layer choice.
Unless otherwise stated, we adopt the \underline{E2M-Last1} strategy as the default setting.
Implementation details and the exact layer indices are provided in Appendix~\ref{app:layer_selection_strategy}, \ref{app:details_index}.

\begin{table}[htbp]
\vspace{-0.2cm}
\centering
\small
\caption{Comparison of layer selection strategies on LIBERO. }
\label{tab:ablation_study}
\setlength{\tabcolsep}{7.1pt}
\begin{tabular}{l|cccc|c}
\toprule
\textbf{Strategy} & \textbf{Spatial} & \textbf{Object} & \textbf{Goal} & \textbf{Long} & \textbf{Avg.} \\ 
\midrule
\rowcolor{black!6}
Baseline & 97.2 & 98.4 & 95.6 & 88.6 & 95.0 \\
\hline
\rowcolor{black!16}
Uniform-8 & \textbf{98.2} & 99.8 & 96.4 & \textbf{95.0} & \textbf{97.4} \\
\rowcolor{black!14}
E2M-Last1 & 96.2 & 98.0 & \textbf{97.2} & 93.6 & 96.3 \\
\hline
\rowcolor{black!6}
Shallow & 97.8 & 99.8 & 95.4 & 85.6 & 94.7 \\
\rowcolor{black!4}
Middle & 89.4 & 99.8 & 94.2 & 91.2 & 93.7 \\
\rowcolor{black!11}
Deep & 92.0 & 99.8 & 95.6 & 93.6 & 95.3 \\
\hline
\rowcolor{black!4}
Sim-1-Top & 92.8 & 99.8 & 95.0 & 86.8 & 93.6 \\
\rowcolor{black!11}
Sim-1-Last & 93.8 & \textbf{100.0} & 95.6 & 92.8 & 95.6 \\
\rowcolor{black!10}
Sim-2-Top & 91.6 & \textbf{100.0} & 96.0 & 94.0 & 95.4 \\
\rowcolor{black!9}
Sim-2-Last & 91.8 & \textbf{100.0} & 93.6 & 94.8 & 95.1 \\

\bottomrule
\end{tabular}
\vspace{-0.2cm}
\end{table}

\input{tables/discussion_layer_selection_strategy}

\subsection{Data and Computational Efficiency}
We evaluate the efficiency of \OURS from three perspectives: 
\textbf{(i) Limited-data training.}
Using $1\%$, $5\%$, and $10\%$ random subsets of LIBERO (each trained for $30$k steps), Table~\ref{tab:ratio_ablation} shows that \OURS still achieves strong performance, indicating high data efficiency, which is particularly important in robotics where data collection is costly.
\textbf{(ii) Convergence speed.}
In Fig.~\ref{fig:train_stage_perf}, \OURS outperforms the baseline by $\sim$10\% at around $10$k steps and continues to improve thereafter, suggesting that the proposed multi-level alignment mechanism more effectively exploits spatial cues across different layers. Please refer to Table~\ref{tab:libero_training_steps} for the complete results.
\textbf{(iii) Computational cost to reach SOTA.}
Table~\ref{tab:sota_exps_cost} summarizes the compute overhead of same-sized models reported in Table~\ref{tab:sota_exps_perf}. \OURS reaches SOTA performance while the previous best method incurs $24\times$ higher cost, demonstrating strong overall computational efficiency among post-training methods.

\begin{table}[htbp]
\centering
\small
\caption{Limited-data training on LIBERO, SR(\%).}
\label{tab:ratio_ablation}
\setlength{\tabcolsep}{8.3pt}
\begin{tabular}{l|cccc|c}
\toprule
\textbf{Ratio} & \textbf{Spatial} & \textbf{Object} & \textbf{Goal} & \textbf{Long} & \textbf{Avg.} \\ 
\midrule
\rowcolor{black!2}
1\%  & 75.0 & 88.8 & 65.0 & 65.6 & 73.6 \\
\rowcolor{black!8}
5\%  & 94.2 & 95.0 & 75.2 & 69.0 & 83.4 \\
\rowcolor{black!8}
10\% & 94.0 & 91.8 & 78.6 & 72.6 & 84.3 \\
\rowcolor{black!16}
100\% & \textbf{96.2} & \textbf{98.0} & \textbf{97.2} & \textbf{93.6} & \textbf{96.3} \\
\bottomrule
\end{tabular}
\end{table}

\begin{table}[htbp]
\caption{Training cost comparison of same-size models on LIBERO. 
\textbf{Cost} is estimated as $\#\text{Model} \times \text{BatchSize} \times \text{Steps}$.}
\label{tab:sota_exps_cost}
\centering
\small
\setlength{\tabcolsep}{9.8pt}
\begin{tabular}{l | c r l }
\toprule
\textbf{Method} &
\textbf{Avg.} &
\textbf{Cost} & \textbf{Details} \\
\midrule

\rowcolor{black!2}
ThinkAct

& 84.4
& $6.0\times$
& $1 \times128  \times75 $k \\
\rowcolor{black!4}
UniVLA
& 95.2
& $0.96\times$
& $1 \times 192 \times 8$k \\
\rowcolor{black!8}
OpenVLA-OFT
& 97.1
& $25.6\times$
& $4 \times 64 \times 150$k \\

\hline

\rowcolor{black!9}
GeoVLA        & 97.7 & $3.2\times$ & $1 \times 256 \times 20$k \\
\rowcolor{black!12}
3D-CAVLA         & 98.1 & $3.0\times$ & $4 \times 8 \times 150$k \\

\hline
\rowcolor{black!16}
Spatial Forcing
& \textbf{98.5}
& $24.0\times$
& $4 \times 64 \times 150$k \\
\rowcolor{black!16}
\OURS
& \textbf{98.5}
& $1.0\times$
& $1 \times 32 \times 50$k
 \\

\bottomrule
\end{tabular}
\vspace{-0.4cm}
\end{table}

\subsection{Ablation Study}

We ablate the key components of \OURS.
Introducing multi-layer alignment with \emph{independent} projectors degrades performance to $80.0\%$, indicating gradient interference across layers.
Replacing independent projectors with the proposed \emph{shared} projector improves performance to $98.2\%$.
Finally, adding the Matryoshka-style sparse activation further increases performance to $98.5\%$.

\begin{table}[htbp]
\caption{Ablation study of \OURS on LIBERO.}
\label{tab:ablation}
\centering
\small
\begin{tabular}{l|cccc|c}
\toprule
\textbf{Method} & \textbf{Spatial} & \textbf{Object} & \textbf{Goal} & \textbf{Long} & \textbf{Avg.} \\ 
\midrule
\rowcolor{black!4}
Baseline 
& 96.9 & 98.7 & 97.6 & 94.8 & 96.4 \\

\rowcolor{black!1}
+ Multi-layer
& 93.6 & 99.2 & 42.2 & 85.0 & 80.0 \\

\rowcolor{black!12}
+ Shared
& \textbf{99.0} & \textbf{99.8} & 97.0 & 96.8 & 98.2 \\
\rowcolor{black!16}
+ Matryoshka
& 98.2 & \textbf{99.8} & \textbf{98.8} & \textbf{97.0} & \textbf{98.5} \\
\bottomrule
\end{tabular}
\vspace{-0.4cm}
\end{table}

\input{tables/ablation_study_MRL}

%% file: tables/discussion_layer_selection_strategy.tex


%% file: tables/ablation_study_MRL.tex

%% file: sections/7_conclusion.tex
\section{Conclusion}\label{sec:conclusions}

We presented \textbf{\OURS}, a residual-oriented multi-layer representation alignment framework that transfers 3D spatial priors from a strong vision foundation model into 2D-pretrained VLA models. By using a \emph{layer-invariant shared projector}, \OURS mitigates gradient interference that arises in naïve multi-layer distillation, and our Matryoshka-style sparse activation further balances supervision across depths. Extensive experiments show that \OURS consistently improves spatial grounding and generalization across datasets and VLA backbones, achieving near state-of-the-art performance on LIBERO with only a small fraction of the compute. We hope \OURS offers a simple and scalable route to more reliable 3D-aware robotic manipulation.

%% file: sections/8_appendix.tex
\section{Training Details}
\label{app:training_details}

\paragraph{Teacher Freezing.}
Throughout the entire training pipeline, we keep the teacher model frozen and only fine-tune the student VLA model.

\paragraph{Learning Rate Schedule and Batch Size.}
We train \textsc{OpenVLA-7B} for a total of $50{,}000$ steps, following its default learning-rate decay strategy with a minor adjustment: we keep the learning rate at $5\times 10^{-4}$ for the first $10{,}000$ steps, and then decay it to $5\times 10^{-5}$ for the remainder of training. We use a batch size of 32.

For the search-oriented experiments in Sec.~\ref{sec:num_layer}, Sec.~\ref{sec:layer_selection} and Table~\ref{tab:ratio_ablation}, to reduce computational cost we decrease the batch size to 16. Meanwhile, to avoid under-training, we keep the learning rate at $5\times 10^{-4}$ for the first $30{,}000$ steps and then decay it to $5\times 10^{-5}$ afterwards.

\paragraph{PI-Model Experiments.}
For experiments related to PI0.5, we set the batch size to 64 and train the model in PyTorch; all other settings follow the default configuration.
For PI0 experiments on \textsc{RoboTwin 2.0}, we use a batch size of 16 and train with the JAX implementation.

We also observe that in such single-task settings, the action loss typically converges quickly. Therefore, we reduce the coefficient of the alignment loss from its default value of 0.5 to 0.125, which yields strong empirical performance.

\section{Performance Trajectories on LIBERO Across Training}
Table~\ref{tab:libero_training_steps} reports success rates on the four LIBERO suites (Spatial, Object, Goal, and Long) at different training stages, together with the overall average. Unlike single-point evaluation at the end of training, this table highlights performance trajectories over the full training course, enabling a more complete comparison of training dynamics across methods.

\begin{table*}[htbp]
\centering
\small
\caption{Success rates (\%) on the LIBERO benchmarks. We compare the performance trajectories of multiple methods over the full training course; \textbf{bold} indicates the best final result, and \underline{underlining} indicates the best result achieved during training.}
\label{tab:libero_training_steps}
\begin{tabular}{c|c|cccc|c}
\toprule
\textbf{Method} & \textbf{Steps} & \textbf{Spatial} & \textbf{Object} & \textbf{Goal} & \textbf{Long} & \textbf{Avg.} \\ 
\midrule
\multirow{4}{*}{Baseline} & 10k & 74.6 & 89.4 & 78.6 & 37.2 & 70.0 \\
                           & 20k & 93.0 & \underline{99.6} & 97.6 & \underline{94.4} & 96.2 \\
                           & 30k & \underline{94.6} & 99.2 & \underline{98.2} & 93.6 & \underline{96.4} \\
                           & 50k & 94.0 & 97.2 & 98.0 & 92.4 & 95.4 \\
\midrule
\multirow{4}{*}{Spatial Forcing} & 10k & 83.4 & 99.8 & 52.6 & 31.6 & 66.9 \\
                           & 20k & 94.2 & 99.8 & 94.8 & 90.4 & 94.8 \\
                           & 30k & \underline{97.8} & \underline{\textbf{100.0}} & 92.6 & \underline{95.2} & \underline{96.4} \\
                           & 50k & \underline{97.8} & 98.8 & \underline{96.2} & 90.6 & 95.9 \\
\midrule
\multirow{4}{*}{Multi-layer Alignment} & 10k & 86.2 & 74.6 & 22.4 & 52.6 & 59.0 \\
                            & 20k & 93.8 & 99.8 & 35.0  & 83.4 & 78.0 \\
                            & 30k & \underline{94.6} & \underline{\textbf{100.0}} & 37.4 & 81.8 & 78.5 \\
                            & 50k & 93.6 & 99.2 & \underline{42.2} & \underline{85.0} & \underline{80.0} \\

\midrule
\multirow{4}{*}{\OURS (Shared-only)} & 10k & 94.2 & 95.2 & 70.4 & 70.2 & 82.5 \\
                           & 20k & 94.0 & 99.6 & 97.4 & 94.6 & 96.4 \\
                           & 30k & 97.0 & 99.6 & \underline{98.6} & 92.4 & 96.9 \\
                           & 50k & \underline{\textbf{99.0}} & \underline{99.8} & 97.0 & \underline{96.8} & \underline{98.2} \\
\midrule
\multirow{4}{*}{\OURS} & 10k & 90.8 & 91.4 & 74.0 & 55.6 & 78.0 \\
                       & 20k & 93.4 & \underline{\textbf{100.0}} & 98.0 & 94.4 & 96.5 \\
                       & 30k & 96.0 & 99.2 & 98.0 & 93.8 & 96.8 \\
                       & 50k & \underline{98.2} & 99.8 & \underline{\textbf{98.8}} & \underline{\textbf{97.0}} & \underline{\textbf{98.5}} \\

\bottomrule
\end{tabular}
\end{table*}

\section{Dataset Details}

\subsection{LIBERO}

LIBERO~\cite{liu2023libero} is a benchmark suite for language-conditioned robotic manipulation, designed to evaluate multi-task imitation learning and generalization. The original LIBERO release consists of \textbf{12} manipulation tasks grouped into \textbf{4} task suites, instantiated across \textbf{3} robot embodiments. Tasks are defined with natural language instructions, enabling direct study of vision-language-action (VLA) policy learning.

The four task suites capture complementary generalization axes. LIBERO-SPATIAL focuses on spatial configuration variation, LIBERO-OBJECT emphasizes object-level diversity, LIBERO-GOAL introduces goal variation within shared task structures, and LIBERO-LONG contains long-horizon, multi-stage manipulation tasks requiring sequential reasoning~\cite{liu2023libero}. This structured design enables controlled evaluation of cross-task and cross-goal generalization.

LIBERO provides expert demonstration trajectories collected in simulation using RoboSuite. The original dataset contains approximately \textbf{20{,}000} demonstration episodes in total, with each task comprising roughly \textbf{1{,}500--2{,}000} trajectories depending on task complexity. Each episode consists of temporally dense state-action sequences spanning hundreds of timesteps.

Each episode includes synchronized multimodal observations, including RGB images from fixed camera viewpoints and robot proprioceptive states such as joint positions and gripper status. Actions are specified in continuous control space, typically as end-effector pose commands. Data are stored in a standardized trajectory format, facilitating efficient loading and integration with common imitation learning pipelines.

\subsection{LIBERO-Plus}

LIBERO-Plus~\cite{fei2025libero} is an extended version of the original LIBERO benchmark, designed to scale up task diversity and data volume while preserving the same language-conditioned manipulation setting. LIBERO-Plus significantly expands the number of tasks and demonstrations, enabling evaluation of large-capacity vision-language-action (VLA) models under more data-rich and heterogeneous training regimes.

Compared to the original LIBERO benchmark, LIBERO-Plus includes \textbf{over 100} manipulation tasks spanning a broader range of object configurations, goal specifications, and scene layouts. Tasks are defined using natural language instructions and are organized to support both multi-task learning and compositional generalization. The expanded task set increases structural and semantic diversity, making LIBERO-Plus more suitable for training and evaluating foundation-scale robotic policies.

LIBERO-Plus provides a substantially larger collection of expert demonstration trajectories generated in simulation. In total, the dataset contains \textbf{hundreds of thousands of demonstration episodes}, with each task comprising thousands of trajectories. Each episode consists of temporally dense state-action sequences, supporting long-horizon policy learning and sequence modeling.

Each episode includes synchronized multimodal observations, including RGB visual inputs, robot proprioceptive states, and continuous control actions specified in end-effector space. Data are stored in the same standardized trajectory format as LIBERO, enabling seamless reuse of training pipelines and direct comparison across LIBERO and LIBERO-Plus benchmarks.

\subsection{RoboTwin 2.0}

RoboTwin 2.0~\cite{chen2025robotwin}  is a large-scale simulation benchmark and dataset for bimanual robotic manipulation, providing both an automated data generator and unified evaluation protocols. The official release instantiates \textbf{50 dual-arm tasks} across \textbf{5 robot embodiments}, targeting robust long-horizon manipulation with diverse object interactions. RoboTwin 2.0 is built on an annotated object library (RoboTwin-OD) containing \textbf{731 object instances} spanning \textbf{147 categories}, enabling systematic task variation in object geometry, placement, and scene composition.

In addition to the generator, RoboTwin 2.0 provides \textbf{over 100,000} pre-collected expert trajectories/episodes for downstream training and benchmarking. Each episode consists of temporally aligned state-action sequences with multimodal observations. Visual observations include rendered RGB images from calibrated cameras (often multi-view) to capture both global context and fine-grained contact-relevant details. Proprioceptive observations include robot joint states (e.g., positions/velocities) and gripper states. Actions are specified in continuous control space (e.g., end-effector or joint-space commands), which makes the dataset compatible with standard imitation learning and policy learning pipelines.

Overall, RoboTwin 2.0 is designed to support scalable training and controlled evaluation of general-purpose policies, particularly for studying cross-task and cross-embodiment generalization under instruction- or goal-conditioned settings.

\section{Model Details}

\subsection{OpenVLA}
OpenVLA~\cite{kim2024openvla} is a 7B-parameter VLA based on the \textbf{Prismatic-7B VLM}~\cite{karamcheti2024prismatic} framework. The architecture's vision tower employs a fused visual encoder (totaling 600M parameters) constructed by combining two pretrained vision models: \textbf{DINOv2}~\cite{oquab2023dinov2} and \textbf{SigLIP}~\cite{zhai2023sigmoid}. Specifically, input image patches are passed separately through both encoders, and the resulting feature vectors are concatenated channel-wise. These fused visual features are then mapped into the language embedding space via a small 2-layer Multi-Layer Perceptron \textbf{(MLP) projector}. The central reasoning engine is a 7B-parameter \textbf{Llama 2}~\cite{touvron2023llama} language model backbone, which processes the interleaved visual tokens and language instructions to predict 7-dimensional robot actions. It features \textbf{32 Transformer layers} with \textbf{a hidden dimension of 4096} and \textbf{32 attention heads}. OpenVLA represents robot actions by discretizing them into \textbf{256 bins per dimension}, treating them as specialized tokens within the LLM's vocabulary.

\subsection{OpenVLA-OFT}
OpenVLA-OFT~\cite{kim2025fine} is a fine-tuning variant of OpenVLA that inherits the same base components. Instead of discretizing actions, OpenVLA-OFT employs a \textbf{continuous action prediction head}—a 2-block MLP ResNet that directly predicts 7-dimensional continuous actions from the LLM's hidden states via L1 regression. During fine-tuning, OpenVLA-OFT freezes the vision tower and projector and applies \textbf{LoRA}~\cite{lora2021} adapters to the LLM backbone, targeting all linear layers in self-attention and MLP blocks. OpenVLA-OFT+ extends it with \textbf{FiLM}~\cite{perez2018film} for enhanced language grounding. Specifically, FiLM applies learnable affine transformations to the visual features based on the language instruction, enabling the model to modulate visual representations according to task-specific language cues.

\subsection{PI0}

PI0\cite{black2024pi_0} is a vision-language-action (VLA) model that generates robot action sequences using flow matching. The model employs a dual-expert architecture: a vision-language expert and an action expert. The vision tower uses \textbf{SigLIP So400m/14}~\cite{zhai2023sigmoid}. The language model is \textbf{PaliGemma}~\cite{beyer2024paligemma} based on Gemma 2B~\cite{team2024gemma}, with \textbf{hidden dimension 2048, 18 transformer layers}. The action expert is a \textbf{Gemma 300M} model with hidden dimension 1024, 18 layers. During training, PI0 learns a velocity field $v_t(x)$ via \textbf{flow matching}.

\subsection{PI0.5}

PI0.5~\cite{black2025pi05} shares the same architectural foundation as PI0 but introduces two key modifications for improved open-world generalization through knowledge insulation. First, instead of providing the robot state as a continuous vector in the suffix, PI0.5 discretizes the state into \textbf{256 bins} over $[-1, 1]$ and incorporates it as discrete tokens in the language prefix. Second, PI0.5 replaces MLP-based timestep fusion with adaptive RMS normalization (\textbf{AdaRMSNorm}) for timestep conditioning. The timestep embedding is processed through a two-layer MLP with Swish activations to produce a 1024-dimensional condition vector. This condition is injected into the action expert through AdaRMSNorm layers in each of the 18 transformer layers, where AdaRMSNorm dynamically generates scale and shift parameters, which provides more flexible timestep conditioning compared to the MLP fusion approach in PI0.

\section{Projector Similarity}
\label{app:projector_similarity}
\begin{figure}[t]
    \centering
    \includegraphics[width=0.55\linewidth]{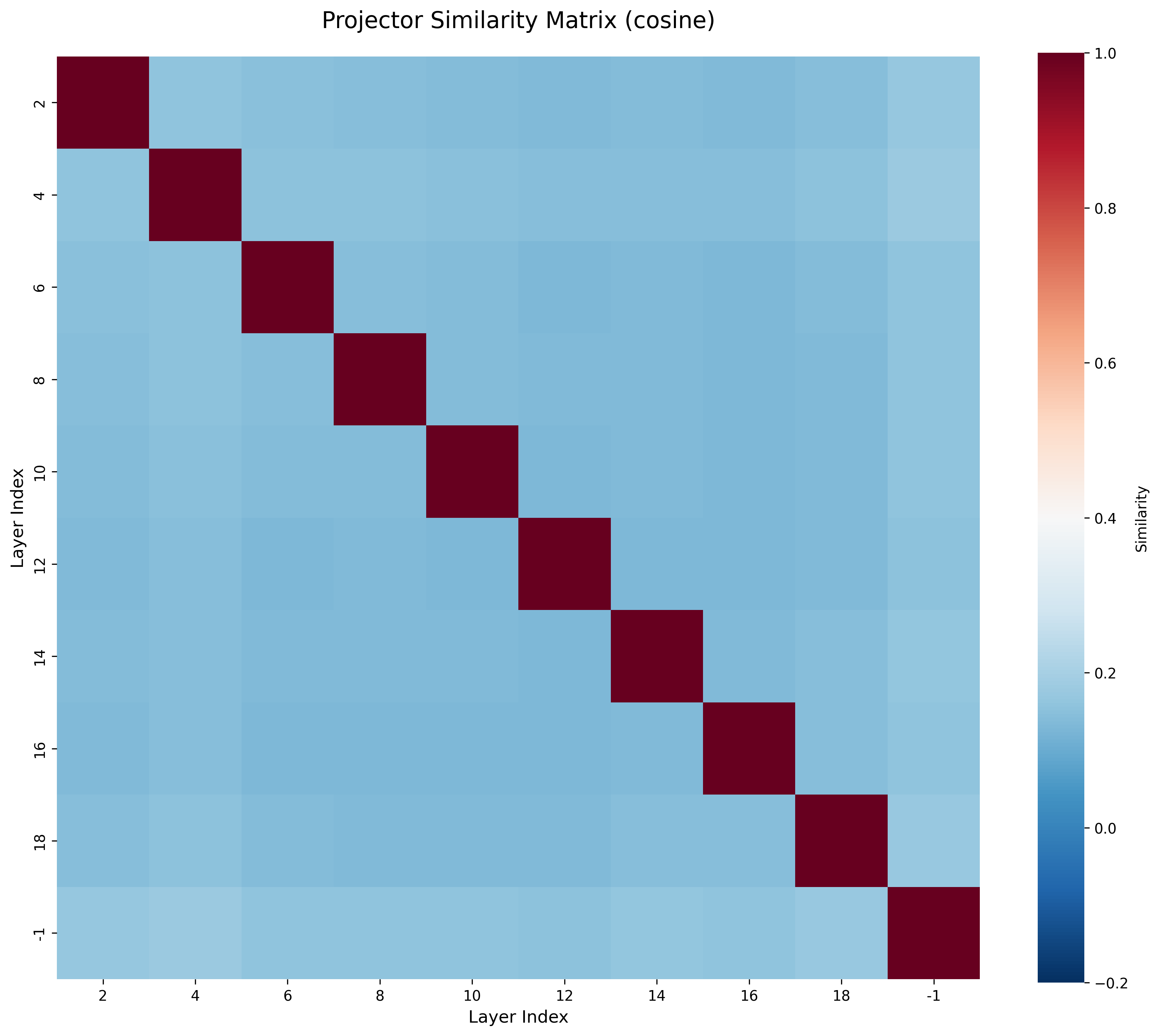}
    \caption{Pairwise cosine similarity of flattened MLP parameters for independently initialized projectors. The similarities are nearly zero, suggesting that naive multi-layer alignment with multiple independent projectors learns distinct mapping trajectories, which can induce gradient conflicts and harm final performance.}
    \label{fig:projector_cosine_similarity}
\end{figure}

We flatten the MLP parameters of each independently initialized projector (i.e., $\phi=\{W_1, W_2\}$) and compute the pairwise cosine similarities. As shown in Fig.~\ref{fig:projector_cosine_similarity}, the similarities between different projectors are nearly zero, indicating that naive multi-layer alignment with multiple independent projectors tends to learn distinct mapping trajectories when modeling the transformation from one residual stream to another. This discrepancy can lead to gradient conflicts, which in turn degrades the final performance.

\section{Why We Need a Nested Sparse-Activation Mechanism to Balance Multiple Alignment Losses}
\label{app:shallow_fast}

We compute the similarity between the student model OpenVLA-7B and the teacher model VGGT using centered kernel alignment (CKA)~\cite{cortes2012algorithms}. As shown in Fig.\ref{fig:cka_vla_vggt}, we observe that the shallow layers of VGGT are more similar to OpenVLA-7B, while the similarity gradually decreases as the depth increases. This finding not only suggests that different layers of VGGT contain semantic information with varying levels of richness, but also indicates that it is easier to align VGGT and OpenVLA-7B at shallow layers. In practice, a shared projector typically learns a mapping that works well for aligning shallow layers; however, it often fails to achieve equally good alignment for deeper layers.

\begin{figure}[htbp]
    \centering
    \includegraphics[width=0.85\linewidth]{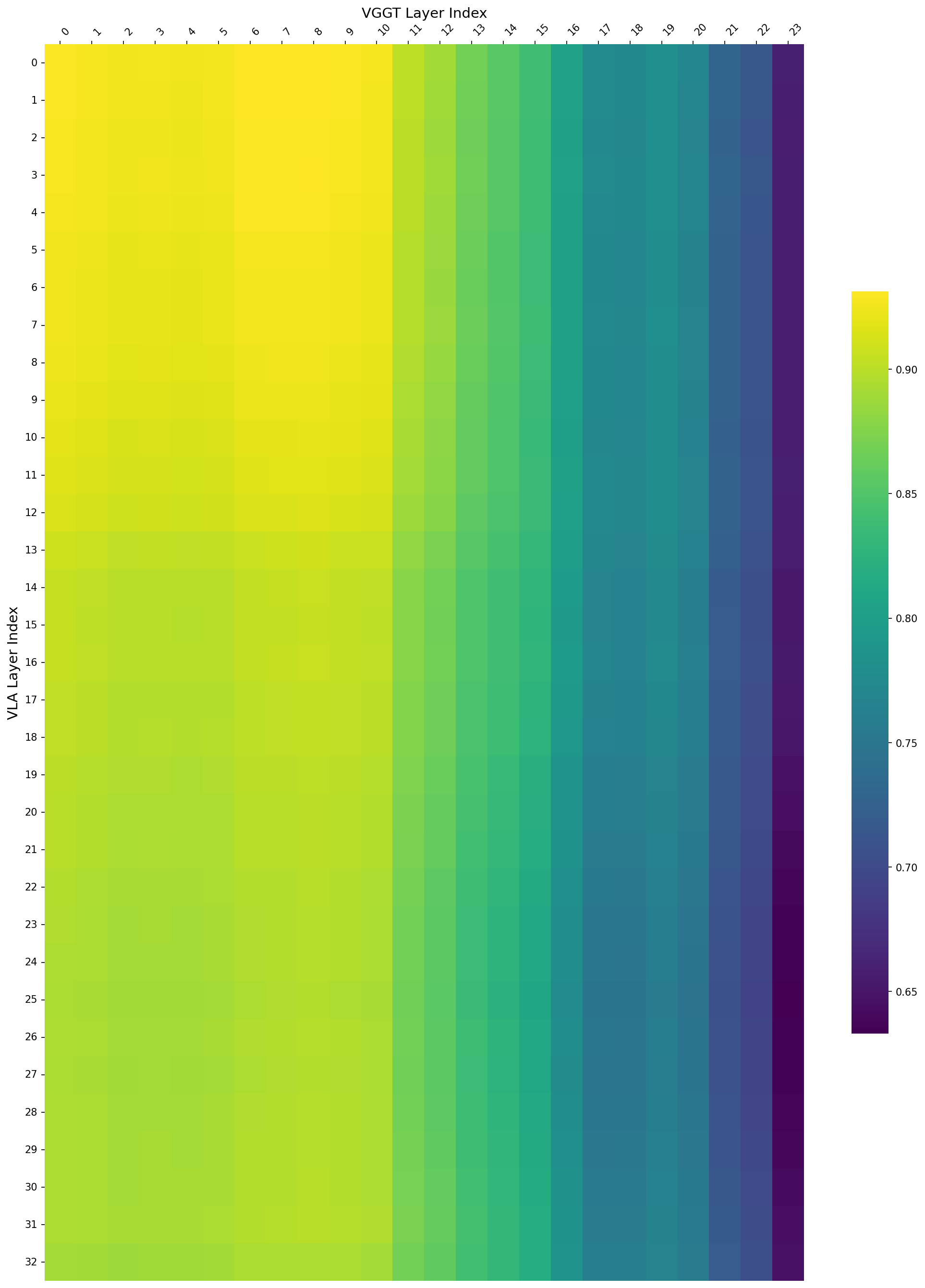}
    \caption{\textbf{Layer-wise CKA similarity between OpenVLA-7B and VGGT.} 
    We compute linear CKA between OpenVLA-7B vision-token representations and VGGT visual features across layers.
    Higher values indicate more similar representations. Shallow VGGT layers exhibit higher similarity to OpenVLA-7B, and similarity generally decreases with depth.}
    \label{fig:cka_vla_vggt}
\end{figure}

To address this multi-layer alignment mismatch, we propose a nested sparse-activation mechanism. The key idea is to let shallow layers activate (and thus update/use) fewer parameters in the projector, and gradually increase the number of activated parameters as the layer depth increases, until reaching the full projector capacity at the deepest layers.

\paragraph{CKA principle.}
CKA measures the similarity between two representation spaces by comparing their induced pairwise similarity structure, and can be viewed as a normalized version of HSIC (Hilbert-Schmidt Independence Criterion). Concretely, suppose we collect representations from one layer of the student model as a matrix $X \in \mathbb{R}^{n \times d_x}$ and from one layer of the teacher model as $Y \in \mathbb{R}^{n \times d_y}$, where $n$ is the number of matched observations (e.g., the same $n$ tokens or image regions), while $d_x$ and $d_y$ can be different.

A common choice is \emph{linear CKA}, computed via Gram matrices:
\[
K = X X^\top,\quad L = Y Y^\top,
\]
and centering them with the centering matrix $H = I - \frac{1}{n}\mathbf{1}\mathbf{1}^\top$:
\[
\tilde K = H K H,\quad \tilde L = H L H.
\]
Then CKA is
\[
\mathrm{CKA}(X,Y)
= \frac{\langle \tilde K, \tilde L \rangle_F}{\|\tilde K\|_F \, \|\tilde L\|_F},
\]
where $\langle A,B\rangle_F = \mathrm{tr}(A^\top B)$ is the Frobenius inner product.
Equivalently, for centered $X$ and $Y$, linear CKA can be written as:
\[
\mathrm{CKA}(X,Y)
= \frac{\|X^\top Y\|_F^2}{\|X^\top X\|_F \, \|Y^\top Y\|_F}.
\]
Importantly, the above formulation does \emph{not} require $d_x = d_y$; it only requires that $X$ and $Y$ share the same number of rows $n$ (i.e., aligned observations). Intuitively, CKA compares how similarly the two models organize the same set of examples in representation space, and it is invariant to isotropic scaling (and, in the linear case, robust to orthogonal transformations), making it suitable for cross-model, cross-dimension similarity analysis.

\paragraph{Details of computing CKA similarity between OpenVLA-7B and VGGT.}
Since OpenVLA-7B and VGGT have different hidden dimensions, we adopt CKA because it can compare representations of different feature dimensions as long as they are evaluated on the same set of samples (tokens/images), i.e., the same number of observations.

In our implementation, we measure representation similarity between OpenVLA-7B (student) and VGGT (teacher) using \emph{linear CKA} on a held-out set of samples. For each input batch, we extract the vision-token hidden states from selected layers of OpenVLA-7B and the corresponding visual features from selected layers of VGGT. Since the two models have different spatial resolutions and hidden dimensions, we first resample the VGGT features to match the spatial layout of OpenVLA’s vision tokens, then flatten both representations into per-sample feature vectors and concatenate all samples to form matrices $X\in\mathbb{R}^{N\times D_x}$ and $Y\in\mathbb{R}^{N\times D_y}$. We then compute linear CKA for every layer pair by constructing and centering Gram matrices $K=XX^\top$ and $L=YY^\top$, and reporting the normalized HSIC score $\mathrm{CKA}(X,Y)=\langle \tilde K,\tilde L\rangle_F/(\|\tilde K\|_F\|\tilde L\|_F)$, which is comparable even when $D_x\neq D_y$.

\section{Per-task Results on LIBERO-Plus}
\label{app:libero_plus_details}
Detailed per-task results corresponding to Fig.~\ref{fig:liberoplus} are provided in Figs.~\ref{fig:libero_plus_spatial}--\ref{fig:libero_plus_long}.

\begin{figure*}[htbp]
  \centering
  \includegraphics[width=1.01\linewidth]{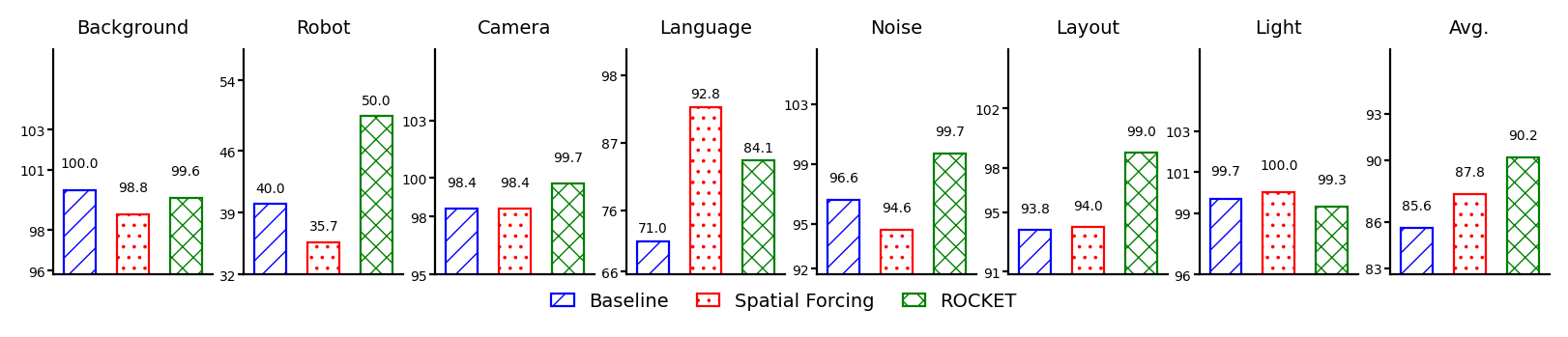}
  \vspace{-0.4cm}
  \caption{Per-task success rate (\%) on LIBERO-Plus \textbf{Spatial} tasks under the seven perturbations.}
  \label{fig:libero_plus_spatial}
\end{figure*}

\begin{figure*}[htbp]
  \centering
  \includegraphics[width=1.01\linewidth]{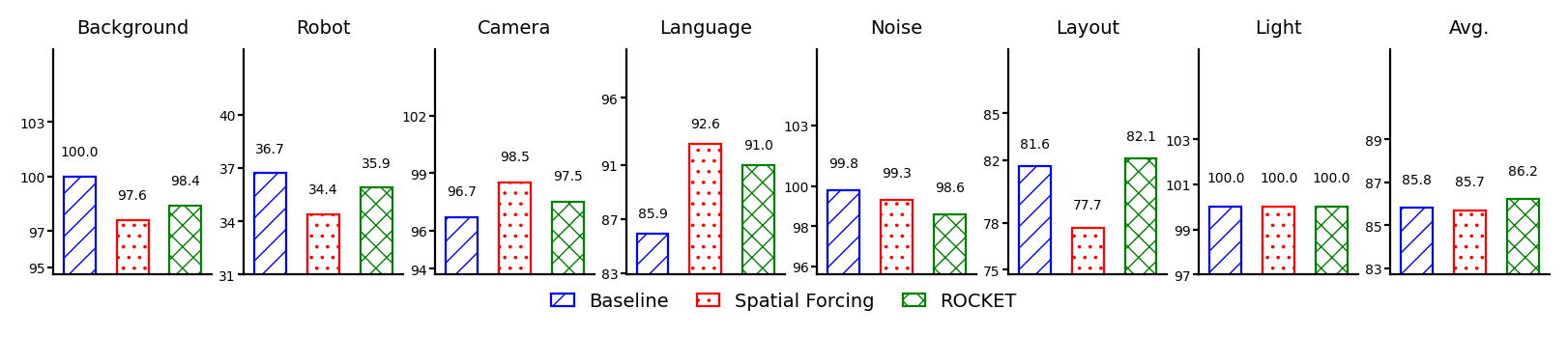}
  \vspace{-0.4cm}
  \caption{Per-task success rate (\%) on LIBERO-Plus \textbf{Object} tasks under the seven perturbations.}
  \label{fig:libero_plus_object}
\end{figure*}

\begin{figure*}[htbp]
  \centering
  \includegraphics[width=1.01\linewidth]{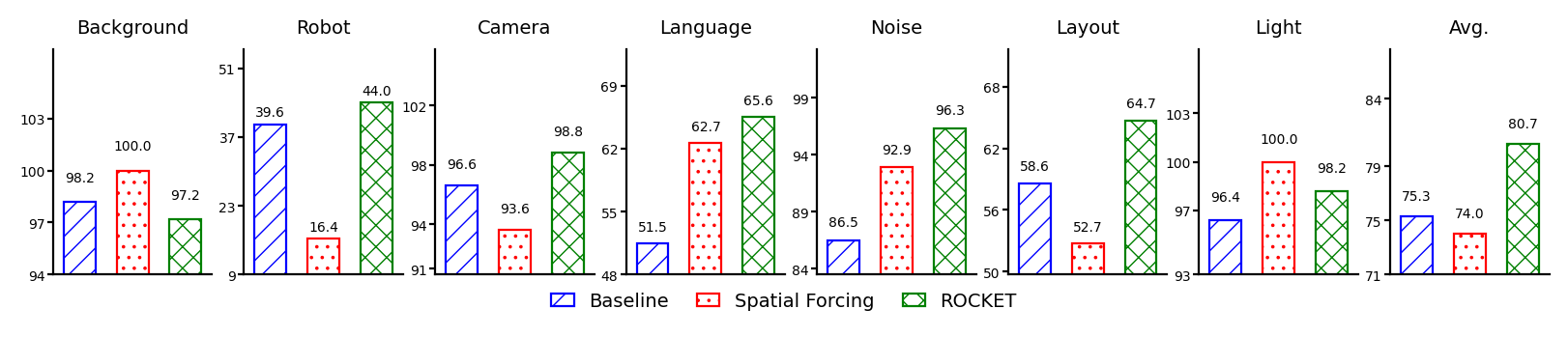}
  \vspace{-0.4cm}
  \caption{Per-task success rate (\%) on LIBERO-Plus \textbf{Goal} tasks under the seven perturbations.}
  \label{fig:libero_plus_goal}
\end{figure*}

\begin{figure*}[htbp]
  \centering
  \includegraphics[width=1.01\linewidth]{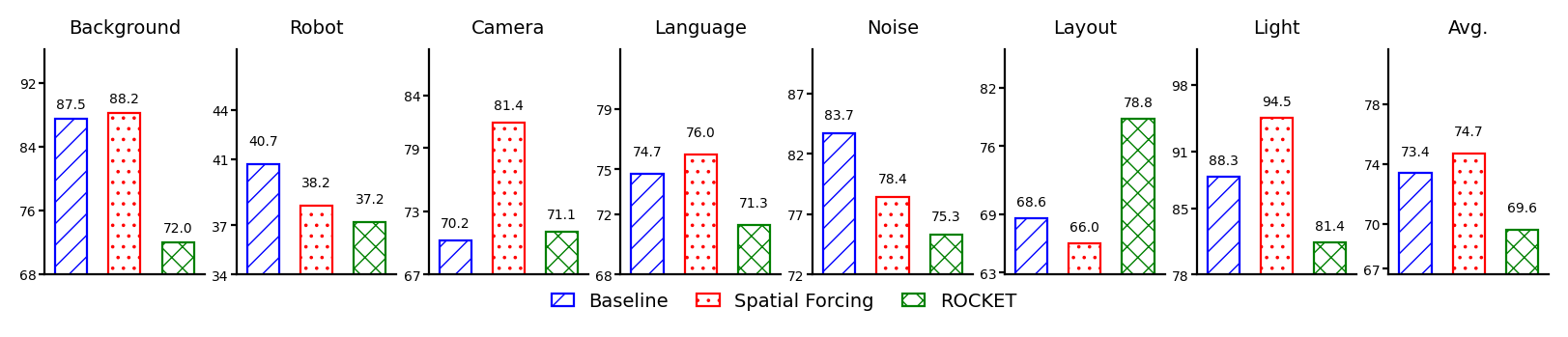}
  \vspace{-0.4cm}
  \caption{Per-task success rate (\%) on LIBERO-Plus \textbf{Long} tasks under the seven perturbations.}
  \label{fig:libero_plus_long}
\end{figure*}

\section{Analysis: Why Shared Projectors Increase Cross-Layer Gradient Coherence}
\label{sec:theory_shared_projector}

\subsection{Takeaway: shared projector makes cross-layer gradients add constructively}

This section explains why a \emph{shared} nonlinear projector tends to yield more \emph{cross-layer coherent}
distillation gradients than \emph{layer-specific} independent projectors in a Pre-LN Transformer.
The argument has two steps:
(i) under Pre-LN \emph{residual-smallness}, back-propagation through the residual stream is close to identity,
so the gradient at an earlier layer is approximately a superposition of \emph{future local} distillation gradients;
(ii) the stability of this superposition is governed by \emph{cross-layer interference} terms, and a shared projector
imposes a common operator structure on these terms (up to controlled deviation), whereas Multiple independent projectors do not.

\subsection{Setup: Pre-LN residual stream, nonlinear projector, and distillation loss}

Let the student network be a Pre-LN Transformer written on the residual stream:
\begin{equation}
\boxed{
h_{l+1} \;=\; h_l \;+\; F_l\!\left(\mathrm{LN}(h_l);\theta_l\right),
\qquad l=0,\dots,L-1
}
\label{eq:preln_residual}
\end{equation}
where $h_l\in\mathbb{R}^d$ denotes the (flattened) residual representation at layer $l$,
$\mathrm{LN}(\cdot)$ is LayerNorm, and $\theta_l$ are the block parameters.
Define $u_l := \mathrm{LN}(h_l)$ and $\Delta_l := F_l(u_l;\theta_l)$ so that
$h_{l+1}=h_l+\Delta_l$.

\paragraph{Nonlinear projector.}
For each distillation layer $l\in\mathcal{S}$, we attach a two-layer MLP projector
(with input normalization $\mathrm{LN}_v$):
\begin{align}
\tilde h_l &:= \mathrm{LN}_v(h_l)
\label{eq:vlm_norm_def}\\
p_{\phi}(h_l) &:= W_2\,\sigma\!\left(W_1 \tilde h_l\right),
\qquad \sigma = \mathrm{GELU},
\label{eq:projector_def}
\end{align}
where $\phi=\{W_1,W_2\}$ (and potentially $\mathrm{LN}_v$ parameters).

\paragraph{Layerwise distillation loss.}
Let $t_l$ denote the teacher representation used as the distillation target for the student's layer $l$, selected via a fixed layer mapping.
We define the multi-layer distillation objective:
\begin{equation}
\mathcal{L}_{\mathrm{align}}(\theta,\{\phi_l\})
\;:=\;
\sum_{l\in\mathcal{S}}\alpha_l\,\ell\!\left(z_l, t_l\right),
\qquad
z_l := p_{\phi_l}(h_l(\theta)),
\label{eq:dist_loss}
\end{equation}
where $\ell(\cdot,\cdot)$ is any differentiable similarity loss
(e.g., $1-\cos(\cdot,\cdot)$), and $\alpha_l \ge 0$ is the coefficient of the alignment loss; by default, we use uniform (mean) weighting.

\subsection{Jacobians needed for gradient transport and projector-induced coupling}

\paragraph{Pre-LN residual Jacobian.}
Differentiating \eqref{eq:preln_residual} yields
\begin{equation}
\frac{\partial h_{l+1}}{\partial h_l}
\;=\;
I + A_l,
\qquad
A_l
:=
\frac{\partial F_l(u_l;\theta_l)}{\partial u_l}\,
J_{\mathrm{LN}}(h_l),
\label{eq:preln_jacobian}
\end{equation}
where $J_{\mathrm{LN}}(h_l)$ denotes the Jacobian of $\mathrm{LN}$ at $h_l$.
Thus, for $i<l$,
\begin{equation}
\frac{\partial h_l}{\partial h_i}
\;=\;
\prod_{k=1}^{l-i} (I + A_{l-k}).
\label{eq:chain_preln}
\end{equation}

\paragraph{Projector Jacobian.}
Let $x := W_1\tilde h$ and $D_{\sigma}(x):=\mathrm{diag}(\sigma'(x))$.
Then the Jacobian of \eqref{eq:projector_def} is
\begin{equation}
J_{p_\phi}(h)
=
W_2\,D_{\sigma}(W_1\tilde h)\,W_1\,J_{\mathrm{LN}_v}(h),
\label{eq:projector_jacobian}
\end{equation}
where $J_{\mathrm{LN}_v}(h)$ is the Jacobian of $\mathrm{LN}_v$.

\subsection{Exact gradient at an early layer and its superposition approximation}
This subsection writes the exact distillation gradient at $h_i$ and then states the approximation that removes
deep transport effects under Pre-LN residual-smallness.

For each $l\in\mathcal{S}$, define the local gradient from the distillation loss:
\begin{equation}
g_l \;:=\; \nabla_{z_l}\,\ell(z_l,t_l).
\label{eq:g_def}
\end{equation}
By the chain rule, the gradient at the student layer $l$ is:
\begin{equation}
\nabla_{h_l}\,\ell(z_l,t_l)
\;=\;
J_{p_{\phi_l}}(h_l)^\top\, g_l.
\label{eq:grad_hl}
\end{equation}
For an earlier layer $i<l$, the gradient is back-propagated through the residual blocks. Note that for a product of matrices $M_{l-1}\dots M_i$, the transpose reverses the order to $M_i^\top \dots M_{l-1}^\top$. Thus, we have:
\begin{equation}
\nabla_{h_i}\,\ell(z_l,t_l)
\;=\;
\left(\prod_{k=i}^{l-1}(I + A_k)^\top\right)
J_{p_{\phi_l}}(h_l)^\top\, g_l,
\label{eq:grad_hi_exact}
\end{equation}
where the product is ordered from $k=i$ (leftmost) to $l-1$ (rightmost).
Summing over all distillation layers $l \ge i$ gives the total gradient at layer $i$:
\begin{equation}
\nabla_{h_i}\,\mathcal{L}_{\mathrm{align}}
\;=\;
\sum_{l\in\mathcal{S}:l\ge i}
\alpha_l
\left(\prod_{k=i}^{l-1}(I + A_k)^\top\right)
J_{p_{\phi_l}}(h_l)^\top\, g_l.
\label{eq:grad_total_exact}
\end{equation}

\subsection{Residual-smallness implies near-identity transport}

\paragraph{Assumption (Residual-smallness for Pre-LN).}
There exists $\varepsilon>0$ such that for all $k$ of interest,
\begin{equation}
\|A_k\|
=
\left\|
\frac{\partial F_k(u_k;\theta_k)}{\partial u_k}\,
J_{\mathrm{LN}}(h_k)
\right\|
\le \varepsilon.
\label{eq:assumption_residual_small}
\end{equation}

\paragraph{Lemma 1 (Near-identity transport).}
Under \eqref{eq:assumption_residual_small}, for any $i<l$, the back-propagation matrix remains close to the identity:
\begin{equation}
\left\|
\prod_{k=i}^{l-1}(I + A_k)^\top - I
\right\|
\le
\exp\big((l-i)\varepsilon\big)-1.
\label{eq:near_identity_lemma}
\end{equation}
For small depth or residuals (i.e., $(l-i)\varepsilon \ll 1$), this bound is approximately $(l-i)\varepsilon$.

\paragraph{Corollary 1 (Gradient is a sum of local distillation gradients, up to transport error).}
Substituting the bound from Lemma 1 into \eqref{eq:grad_total_exact}, we obtain that the gradient at layer $i$ is the superposition of future local gradients plus an error term dependent on depth:
\begin{equation}
\boxed{
\nabla_{h_i}\,\mathcal{L}_{\mathrm{align}}
\;=\;
\sum_{l\in\mathcal{S}:l\ge i}
\alpha_l\,J_{p_{\phi_l}}(h_l)^\top\, g_l
\;+\; \mathcal{E}_i
}
\label{eq:grad_superposition}
\end{equation}
where the error $\mathcal{E}_i$ is bounded by order $O\!\left(\varepsilon\sum_{l\ge i} (l-i)\,\alpha_l\|J_{p_{\phi_l}}^\top g_l\|\right)$.

\subsection{Shared projector makes cross-layer interference constructive}
\label{sec:shared_projector_corrected}

By Corollary~1, the stability of the early-layer update is controlled by how the terms
$J_{p_{\phi_l}}(h_l)^\top g_l$ interact across distillation layers. We now show that sharing the projector
creates a \emph{common operator} governing these interactions (up to controlled deviation), yielding a
signal-aligned lower bound on cross-layer interference.

We compare two parameterization strategies for the auxiliary tasks:
\begin{itemize}
\item \textbf{Multiple independent projectors:} each $l\in\mathcal{S}$ has distinct parameters $\phi_l$.
\item \textbf{Shared projector:} a single set of parameters $\phi_l \equiv \phi$ is used for all $l\in\mathcal{S}$.
\end{itemize}

\paragraph{Gradient forms and the interference term.}
Recalling \eqref{eq:grad_superposition}, the gradient at representation $h_i$ is a superposition of terms.
Let
\begin{equation}
v_l \;:=\; J_{p_{\phi_l}}(h_l)^\top g_l,
\qquad
G \;\approx\; \sum_{l\in\mathcal{S}:l\ge i}\alpha_l\, v_l.
\label{eq:vdef_Gdef}
\end{equation}
The squared norm expands as
\begin{equation}
\|G\|_2^2
\;=\;
\sum_{l}\alpha_l^2\|v_l\|_2^2
\;+\;
\sum_{a\neq b}\alpha_a\alpha_b\,\langle v_a, v_b\rangle,
\label{eq:Gnorm_expand}
\end{equation}
where the cross terms $\langle v_a,v_b\rangle$ determine whether different distillation layers reinforce or
cancel each other.

\paragraph{Proposition 1 (Cross-layer interference as a bilinear form on teacher-side error signals).}
For any $a,b\in\mathcal{S}$,
\begin{equation}
\boxed{
\langle v_a, v_b \rangle
\;=\;
g_a^\top
\underbrace{\left( J_{p_{\phi_a}}(h_a) \, J_{p_{\phi_b}}(h_b)^\top \right)}_{=: \mathcal{M}_{ab}}
g_b
}
\label{eq:interference_def_corrected}
\end{equation}

\paragraph{Error-signal subspace.}
Let the \emph{error-signal subspace} be
\begin{equation}
\mathcal{G}
\;:=\;
\mathrm{span}\{g_l:\ l\in\mathcal{S}\}
\subseteq \mathbb{R}^{\dim(z)}.
\label{eq:error_subspace}
\end{equation}
All teacher-side signals that affect the student update through distillation lie in $\mathcal{G}$ by construction.
Accordingly, we will only require geometric regularity of the shared projector on $\mathcal{G}$.

\subsubsection{\textbf{Case 1 (Shared projector): a common PSD operator governs cross-layer interference.}}

The advantage of shared projector is that cross-layer interactions are governed by (approximately) a \emph{single} operator
on the teacher-side error signals. The following assumptions control how the shared projector Jacobian varies along
the residual stream, enabling us to relate $\mathcal{M}_{ab}$ to a fixed PSD reference operator.

\paragraph{Assumption (Bounded residual increments).}
There exists $\delta>0$ such that for all layers $l$,
\begin{equation}
\|h_{l+1}-h_l\|_2 = \|\Delta_l\|_2 \le \delta.
\label{eq:assumption_delta}
\end{equation}

\paragraph{Assumption (Projector Jacobian Lipschitz).}
There exists $L_J>0$ such that for all $x,y$ in the bounded feature domain,
\begin{equation}
\|J_{p_{\phi}}(x)-J_{p_{\phi}}(y)\|_2 \le L_J\|x-y\|_2.
\label{eq:assumption_LJ}
\end{equation}

\paragraph{Lemma 2 (Cross-layer Jacobian closeness under residual dynamics).}
Under \eqref{eq:assumption_delta} and \eqref{eq:assumption_LJ}, for any $a<b$,
\begin{equation}
\|J_{p_{\phi}}(h_b)-J_{p_{\phi}}(h_a)\|_2
\le
L_J\,(b-a)\,\delta.
\label{eq:jacobian_closeness}
\end{equation}

To convert Jacobian closeness into a statement about alignment, we introduce an \emph{on-subspace}
near-isometry requirement on the teacher-side signals that actually occur.

\paragraph{Assumption (Near-isometry on the error-signal subspace).}
For a reference layer $\bar l\in\mathcal{S}$ and $\bar h := h_{\bar l}$, define
\begin{equation}
J \;:=\; J_{p_\phi}(\bar h),
\qquad
M \;:=\; J J^\top \succeq 0.
\label{eq:JJt_def}
\end{equation}
Assume there exist constants $c>0$ and $\eta\in[0,1)$ such that for all $x\in\mathcal{G}$,
\begin{equation}
(1-\eta)c\,\|x\|_2^2
\;\le\;
x^\top M x
\;\le\;
(1+\eta)c\,\|x\|_2^2.
\label{eq:subspace_isometry}
\end{equation}
Equivalently, $M$ has condition number at most $\frac{1+\eta}{1-\eta}$ when restricted to $\mathcal{G}$.

\paragraph{Lemma 3 (Approximate inner-product preservation on $\mathcal{G}$).}
Under \eqref{eq:subspace_isometry}, for all $x,y\in\mathcal{G}$,
\begin{equation}
\big|x^\top M y - c\, x^\top y\big|
\;\le\;
\eta c \cdot \frac{\|x\|_2^2+\|y\|_2^2}{2}.
\label{eq:innerprod_preserve}
\end{equation}
\textit{Proof sketch.}
Use the polarization identity
$x^\top M y=\frac14\big((x+y)^\top M(x+y)-(x-y)^\top M(x-y)\big)$
and apply the quadratic bounds \eqref{eq:subspace_isometry} to $x\pm y$.

\paragraph{Proposition 2 (Shared projector cross terms decompose into a common PSD part plus deviation).}
Let $\bar h=h_{\bar l}$ and decompose
\begin{equation}
J_{p_\phi}(h_l) = J + E_l,
\qquad
\|E_l\|_2 \le L_J|l-\bar l|\delta,
\label{eq:J_decompose}
\end{equation}
where the bound follows from Lemma~2. Then for any $a,b\in\mathcal{S}$ with $a,b\ge i$,
\begin{equation}
\langle v_a, v_b\rangle_{\mathrm{share}}
=
g_a^\top \big(J_{p_\phi}(h_a)J_{p_\phi}(h_b)^\top\big) g_b
=
g_a^\top M g_b
\;+\; \Delta_{ab},
\label{eq:share_innerprod_decomp}
\end{equation}
where the deviation satisfies
\begin{equation}
|\Delta_{ab}|
\;\le\;
\|g_a\|_2\|g_b\|_2\Big(\|E_a\|_2\|J\|_2 + \|E_b\|_2\|J\|_2 + \|E_a\|_2\|E_b\|_2\Big).
\label{eq:share_deviation_bound}
\end{equation}
Moreover, if $g_a,g_b\in\mathcal{G}$, then by Lemma~3,
\begin{equation}
\boxed{
\langle v_a, v_b\rangle_{\mathrm{share}}
\;\ge\;
c\, g_a^\top g_b
\;-\;
\eta c\cdot \frac{\|g_a\|_2^2+\|g_b\|_2^2}{2}
\;-\;
|\Delta_{ab}|
}
\label{eq:share_lowerbound}
\end{equation}
\textit{Interpretation.}
When (i) the teacher error signals are consistent across layers ($g_a^\top g_b$ positive and not too small),
(ii) the shared projector is near-isometric on $\mathcal{G}$ (small $\eta$),
and (iii) adjacent-layer Jacobians are close (small $\|E_l\|$, i.e., small $(|l-\bar l|\delta)$),
the cross-layer interference term is biased toward being non-destructive and often constructive.

\begin{theorem}[Shared projector induces coherent cross-layer interference under Pre-LN transport]
\label{thm:shared_head_coherence}
Consider the Pre-LN student \eqref{eq:preln_residual} with multi-layer distillation \eqref{eq:dist_loss}.
Under residual-smallness \eqref{eq:assumption_residual_small}, the gradient at an earlier representation $h_i$
admits the superposition form \eqref{eq:grad_superposition}.
For the shared projector parameterization ($\phi_l\equiv\phi$), assume bounded residual increments
\eqref{eq:assumption_delta}, projector Jacobian Lipschitzness \eqref{eq:assumption_LJ}, and near-isometry on
the error-signal subspace \eqref{eq:subspace_isometry}.
Then for any $a,b\in\mathcal{S}$ with $a,b\ge i$ and $g_a,g_b\in\mathcal{G}$, the cross-layer interference
term $\langle v_a,v_b\rangle$ in \eqref{eq:Gnorm_expand} satisfies the shared projector decomposition
\eqref{eq:share_innerprod_decomp} with deviation bounded by \eqref{eq:share_deviation_bound}, and furthermore
admits the signal-aligned lower bound \eqref{eq:share_lowerbound}.
Consequently, when teacher-side error signals are positively correlated across layers (large $g_a^\top g_b$) and the
controlled-deviation terms are small, shared projector gradients exhibit reduced destructive interference and higher
cross-layer coherence in the superposition \eqref{eq:vdef_Gdef}.
\end{theorem}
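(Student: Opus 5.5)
The theorem is an assembly of the earlier lemmas, so I will chain them in order: transport, then the decomposition, then the inner-product bound. \emph{Step 1 (superposition).} Start from the exact gradient \eqref{eq:grad_total_exact}. Write each transport matrix as $P_{il}^\top:=\prod_{k=i}^{l-1}(I+A_k)^\top = I + R_{il}$. Submultiplicativity and $\|I+A_k\|\le 1+\varepsilon\le e^{\varepsilon}$ give, after expanding the product and subtracting $I$, the bound $\|R_{il}\|\le (1+\varepsilon)^{l-i}-1\le e^{(l-i)\varepsilon}-1$. This is Lemma~1. Setting $\mathcal{E}_i:=\sum_{l\ge i}\alpha_l R_{il}J_{p_{\phi_l}}(h_l)^\top g_l$ yields \eqref{eq:grad_superposition}, with $\|\mathcal{E}_i\|\le\sum_l\alpha_l(e^{(l-i)\varepsilon}-1)\|v_l\|$. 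For $(l-i)\varepsilon\ll1$ this is the stated $O(\varepsilon\sum_l(l-i)\alpha_l\|v_l\|)$. With $G:=\sum_l\alpha_l v_l$, the expansion \eqref{eq:Gnorm_expand} is just bilinearity of the inner product. Proposition~1 follows from $\langle J_a^\top g_a,J_b^\top g_b\rangle=g_a^\top J_aJ_b^\top g_b$.

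\emph{Step 2 (shared-projector decomposition).} Set $\phi_l\equiv\phi$. First prove Lemma~2 by telescoping: $h_b-h_a=\sum_{k=a}^{b-1}\Delta_k$, so $\|h_b-h_a\|\le(b-a)\delta$. The Jacobian-Lipschitz assumption \eqref{eq:assumption_LJ} then gives $\|J_{p_\phi}(h_b)-J_{p_\phi}(h_a)\|\le L_J(b-a)\delta$; this requires the $h_l$ to lie in the bounded domain where \eqref{eq:assumption_LJ} holds, which I will state explicitly. Apply this with the reference layer $\bar l$ to get $E_l:=J_{p_\phi}(h_l)-J$ with $\|E_l\|\le L_J|l-\bar l|\delta$. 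Expand
\[
(J+E_a)(J+E_b)^\top = JJ^\top + JE_b^\top + E_aJ^\top + E_aE_b^\top,
\]
and define $\Delta_{ab}:=g_a^\top(JE_b^\top+E_aJ^\top+E_aE_b^\top)g_b$. Cauchy--Schwarz with operator norms gives \eqref{eq:share_deviation_bound}, which is exactly \eqref{eq:share_innerprod_decomp}.

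\emph{Step 3 (lower bound).} Prove Lemma~3 by polarization. Since $M$ is symmetric, $x^\top My=\tfrac14[(x+y)^\top M(x+y)-(x-y)^\top M(x-y)]$, and $x\pm y\in\mathcal{G}$ because $\mathcal{G}$ is a subspace. Apply \eqref{eq:subspace_isometry} to both quadratic terms, taking the upper bound on the first and the lower bound on the second (and vice versa). This gives $|x^\top My-c\,x^\top y|\le\tfrac{\eta c}{4}(\|x+y\|^2+\|x-y\|^2)=\tfrac{\eta c}{2}(\|x\|^2+\|y\|^2)$ by the parallelogram law. With $x=g_a$, $y=g_b$ we get $g_a^\top Mg_b\ge c\,g_a^\top g_b-\eta c(\|g_a\|^2+\|g_b\|^2)/2$. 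Combining with $\langle v_a,v_b\rangle\ge g_a^\top Mg_b-|\Delta_{ab}|$ yields \eqref{eq:share_lowerbound}.

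\emph{Step 4 (conclusion).} The concluding sentence is qualitative. I would make it precise by noting that if $c\,g_a^\top g_b>\eta c(\|g_a\|^2+\|g_b\|^2)/2+|\Delta_{ab}|$ for all pairs, then every cross term in \eqref{eq:Gnorm_expand} is positive. Then $\|G\|^2\ge\sum_l\alpha_l^2\|v_l\|^2$, i.e., the superposition is constructive. I will also remark that $\mathcal{E}_i$ only perturbs $G$ by a quantity controlled in Step~1.

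The main obstacle is not the algebra but the assumptions. In particular, $\|E_l\|$ grows linearly in $|l-\bar l|$, so the deviation bound is only meaningful when the aligned layers lie within a modest depth window around $\bar l$, or when $\delta L_J$ is small. I would state this dependence explicitly rather than hide it.
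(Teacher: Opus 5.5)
Your proposal is correct and follows essentially the same route as the paper: near-identity transport (Lemma~1) gives the superposition of Corollary~1, telescoping plus Jacobian-Lipschitzness gives Lemma~2, expanding $(J+E_a)(J+E_b)^\top$ gives the decomposition and deviation bound of Proposition~2, and polarization gives Lemma~3 and hence the lower bound. Your write-up fills in details the paper leaves implicit: the subset expansion behind $(1+\varepsilon)^{l-i}-1$, the closure of $\mathcal{G}$ under $x\pm y$, the parallelogram step, and a concrete sufficient condition under which every cross term is positive.
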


\subsubsection{\textbf{Case 2 (Multiple independent projectors): lack of a uniform constructive lower bound.}}
For Multiple independent projectors, the cross term becomes
\begin{equation}
\langle v_a, v_b\rangle_{\mathrm{multi}}
=
g_a^\top \big(J_{p_{\phi_a}}(h_a)J_{p_{\phi_b}}(h_b)^\top\big) g_b,
\label{eq:multi_innerprod}
\end{equation}
which does not reduce to a single PSD form $g_a^\top (J J^\top) g_b$ and therefore admits no analogue of
\eqref{eq:innerprod_preserve} or \eqref{eq:share_lowerbound} without additional coupling assumptions.
A generic upper bound is
\begin{equation}
\boxed{
|\langle v_a, v_b\rangle_{\mathrm{multi}}|
\;\le\;
\|g_a\|_2\|g_b\|_2\,
\|J_{p_{\phi_a}}(h_a)\|_2\,
\|J_{p_{\phi_b}}(h_b)\|_2
}
\label{eq:multi_trivial_bound}
\end{equation}
but this bound does not preclude sign flips (destructive interference) since the bilinear form
$J_{p_{\phi_a}}(h_a)J_{p_{\phi_b}}(h_b)^\top$ is not constrained to be PSD or even symmetric.

\textit{Empirical specialization and orthogonalization.}
In our experiments, we observe a consistent ``decoupling'' behavior: as training proceeds and the per-layer alignment losses saturate, different projectors tend to rotate their effective Jacobian row spaces into near-orthogonal subspaces. Concretely, if we measure the pairwise similarity between projector weights (or
their local linearizations), e.g., $\cos\!\left(\mathrm{vec}(W_a),\mathrm{vec}(W_b)\right)$ for linear heads
or an analogous metric based on $J_{p_{\phi_l}}(h_l)$, the values decrease toward (and remain close to) zero in the early stage after training has stabilized, and this behavior persists consistently until the end of training (The visualization results are provided in Appendix~\ref{app:projector_similarity} (Fig.\ref{fig:projector_cosine_similarity}).). 
This indicates that, rather than enforcing a shared representation geometry
across layers, independent heads can absorb layer-specific discrepancies by specializing into mutually
orthogonal directions, which suppresses cross-layer coupling in \eqref{eq:multi_innerprod}.
In Fig.~\ref{fig:grad} (top), we show the cosine similarity between the gradients induced by different alignment losses at an early layer (the projector between the vision tower and the LLM backbone) during the early stage of training and the mid-to-late stage. 
For the $10$ layers, we compute the pairwise cosine similarities among their gradients. The visualization shows that the gradients in the early stage are nearly perfectly orthogonal, indicating substantial training fluctuations; in the mid-to-late stage, this low cosine similarity persists.
In Fig.~\ref{fig:grad} (bottom), we visualize the corresponding results for \OURS. From the very early stage of training to the mid-to-late stage, the shared-projector mechanism in \OURS keeps the gradients induced by different alignment losses consistently aligned. This directional consistency holds throughout training, and the resulting mutual reinforcement among gradients contributes to the superior performance of \OURS. 
Specifically, Fig.~\ref{fig:grad_sim_early} shows the gradient similarities in the early stage of training, while Fig.~\ref{fig:grad_sim_late} shows the gradient similarities in the mid-to-late stage.

As a consequence, even when the teacher error signals are correlated across layers (i.e., $g_a^\top g_b>0$),
the induced student-side gradients $v_a$ and $v_b$ can become weakly correlated because the interaction matrix
$J_{p_{\phi_a}}(h_a)J_{p_{\phi_b}}(h_b)^\top$ itself shrinks in the relevant directions. In practice, this
often yields a small average cross term $\mathbb{E}[\langle v_a,v_b\rangle_{\mathrm{multi}}]$ and can even
lead to destructive interference when the effective interaction changes sign across mini-batches.

\paragraph{Consequence for gradient stability.}
Plugging \eqref{eq:share_lowerbound} (shared projector) into \eqref{eq:Gnorm_expand} shows that the second summation
in \eqref{eq:Gnorm_expand} admits a \emph{signal-aligned} lower bound (up to controlled errors),
while Multiple independent projectors admit only the trivial magnitude bound \eqref{eq:multi_trivial_bound}.
Thus, under the same transport approximation of Corollary~1, shared projector gradients
tend to exhibit higher cross-layer coherence and reduced destructive interference, consistent with empirical observations.

\clearpage

\begin{figure}[htbp]
    \centering
    \includegraphics[width=\linewidth]{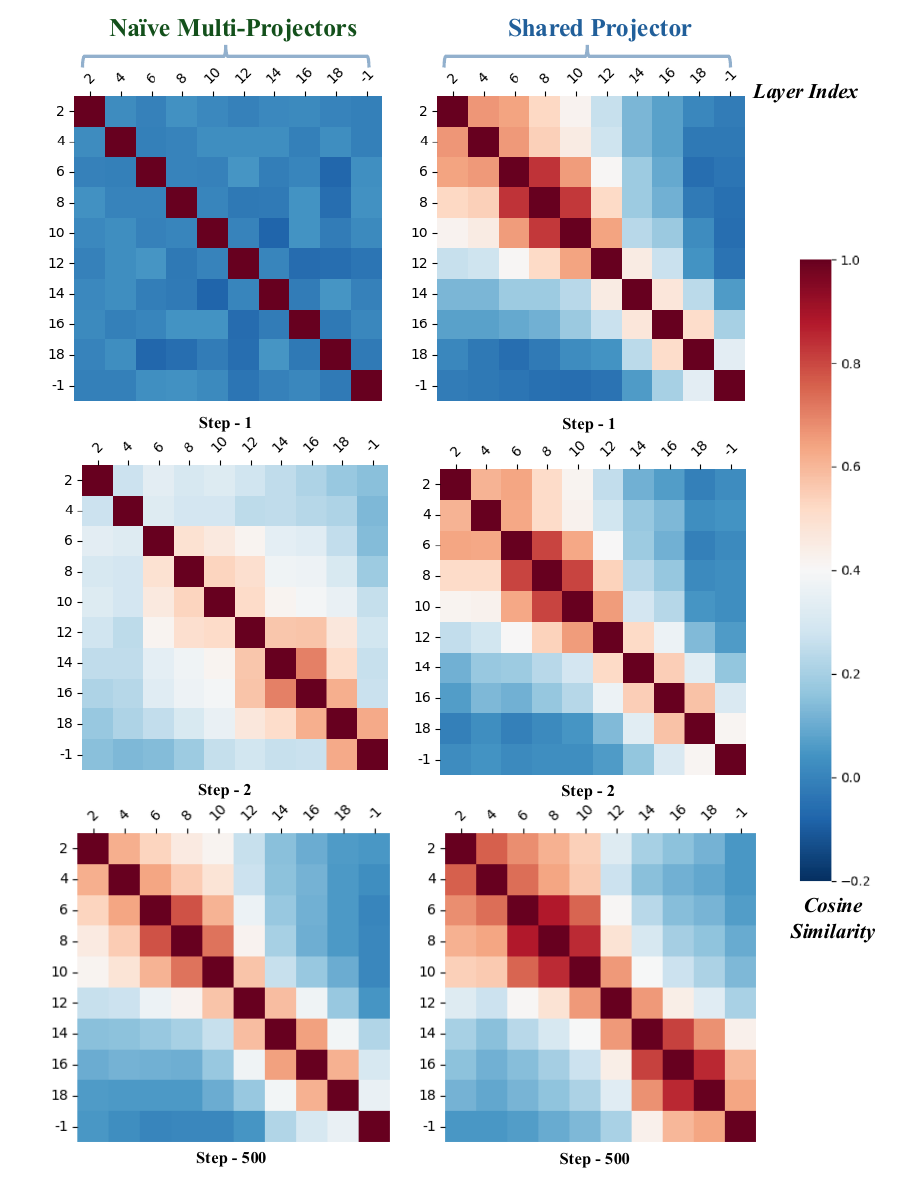}
    \caption{Gradient cosine similarity in the early stage of training.}
    \label{fig:grad_sim_early}
\end{figure}

\begin{figure}[htbp]
    \centering
    \includegraphics[width=\linewidth]{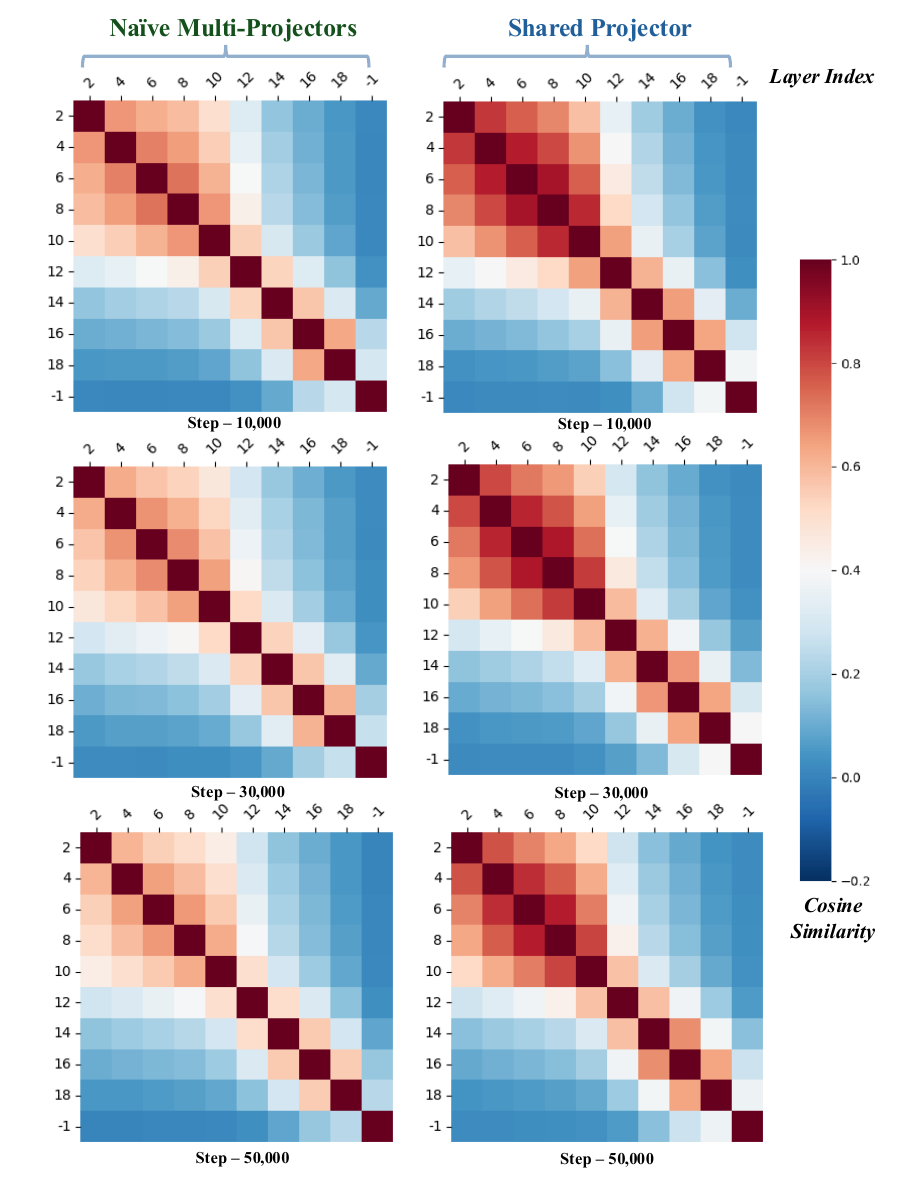}
    \caption{Gradient cosine similarity in the mid-to-late stage of training.}
    \label{fig:grad_sim_late}
\end{figure}

\clearpage

\section{Details of Layer Selection Strategies}

\subsection{Similarity-based Layer Selection}
\label{app:layer_selection_strategy}

\begin{figure*}[htbp]
    \centering
    \includegraphics[width=0.85\textwidth]{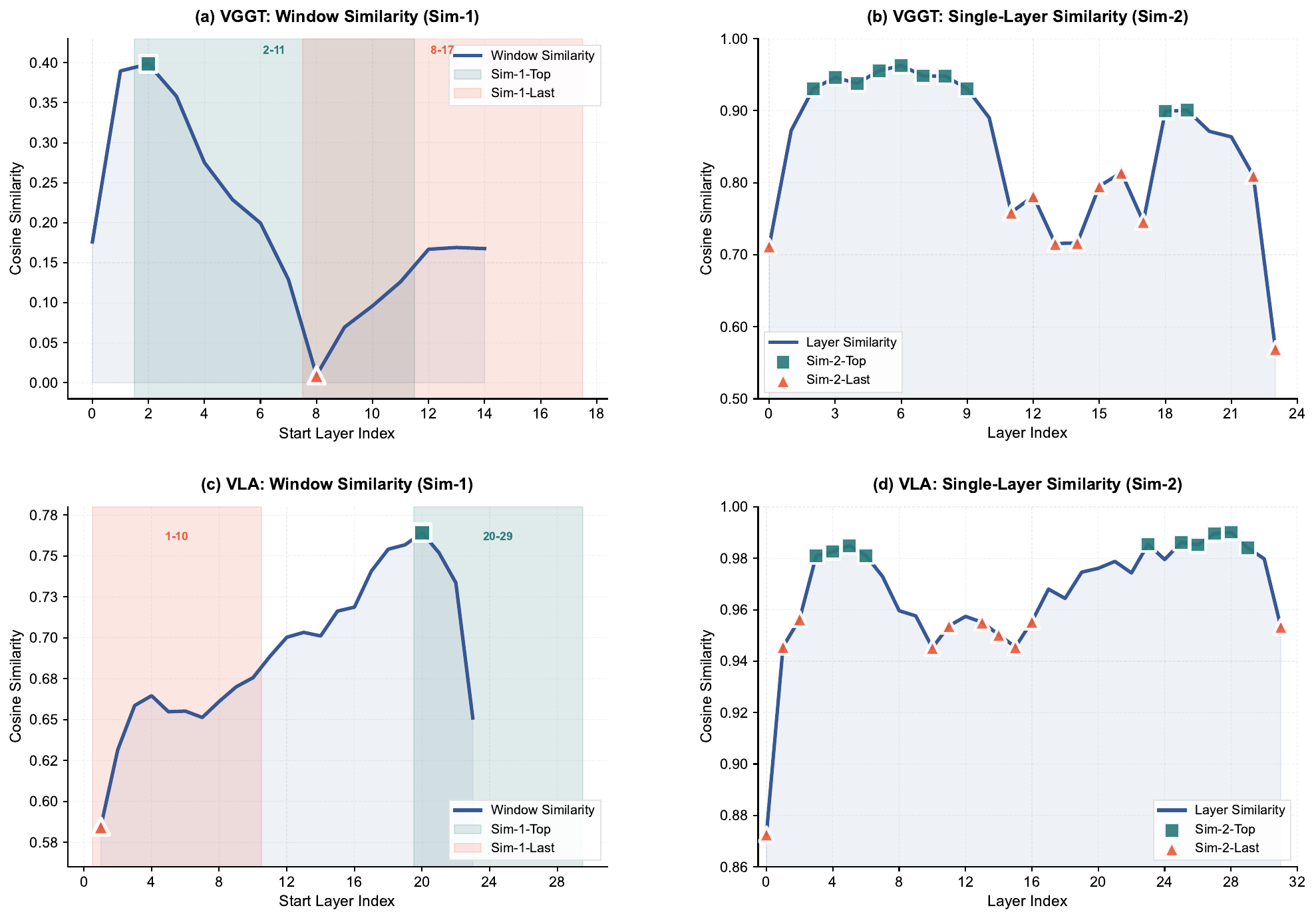} 
    \caption{
    \textbf{Similarity-based layer selection strategies.} 
    We evaluate two criteria for selecting alignment layers: 
    \textbf{Sim-1} (left) measures 10-layer window similarity by computing cosine similarity between layer $i$'s input and layer $i{+}9$'s output; 
    \textbf{Sim-2} (right) measures single-layer similarity between each layer's input and output. 
    For each criterion, we select 10 layers with either highest similarity (\textbf{Top}, green squares) or lowest similarity (\textbf{Last}, orange triangles). 
    }
    \label{fig:layer_select}
\end{figure*}

Figure~\ref{fig:layer_select} shows that VGGT (24 layers, top row) and VLA (32 layers, bottom row) reveal distinct similarity distributions across the network depth. Lower similarity indicates greater information transformation, potentially identifying more informative layers for alignment.

\newpage
\subsection{Detailed Chosen indices for All Layer Selection Strategies }
\label{app:details_index}
\input{tables/layer_indice_detail}

%% file: tables/layer_indice_detail.tex
\begin{table*}[htbp]
\centering
\label{tab:layer_selection_strategies}
\small
\setlength{\tabcolsep}{6pt}
\begin{tabular}{l|c|p{6.5cm}|p{6.5cm}}
\toprule
\textbf{Strategy} & \textbf{\# Layers} & \textbf{VLA Layers} & \textbf{VGGT Layers} \\
\midrule
\multicolumn{4}{c}{\textit{Uniform Sampling}} \\
\midrule
Uniform-4  & 4  & 8, 16, 24, 32 & 5, 11, 17, 23 \\
\addlinespace[2pt]
Uniform-8  & 8  & 4, 8, 12, 16, 20, 24, 28, 32 & 2, 5, 8, 11, 14, 17, 20, 23 \\
\addlinespace[2pt]
Uniform-16 & 16 & 4, 5, 6, 7, 20, 21, 22, 23, 24, 25, 26, 27, 28, 29, 30, 31 & 1, 2, 4, 5, 7, 8, 10, 11, 13, 14, 16, 17, 19, 20, 22, 23 \\
\addlinespace[2pt]
Uniform-24 & 24 & 2, 3, 4, 6, 7, 8, 10, 11, 12, 14, 15, 16, 18, 19, 20, 22, 23, 24, 26, 27, 28, 30, 31, 32 & 0, 1, 2, 3, 4, 5, 6, 7, 8, 9, 10, 11, 12, 13, 14, 15, 16, 17, 18, 19, 20, 21, 22, 23 \\
\midrule
\multicolumn{4}{c}{\textit{Depth-based Sampling}} \\
\midrule
Shallow & 10 & 1, 2, 3, 4, 5, 6, 7, 8, 9, 10 & 0, 1, 2, 3, 4, 5, 6, 7, 8, 9 \\
\addlinespace[2pt]
Middle  & 10 & 12, 13, 14, 15, 16, 17, 18, 19, 20, 21 & 7, 8, 9, 10, 11, 12, 13, 14, 15, 16 \\
\addlinespace[2pt]
Deep    & 10 & 23, 24, 25, 26, 27, 28, 29, 30, 31, 32 & 14, 15, 16, 17, 18, 19, 20, 21, 22, 23 \\
\midrule
\multicolumn{4}{c}{\textit{Even-to-Max (E2M) Sampling}} \\
\midrule
\(E2M\text{-}Last1^{*}\) & 10 & 2, 4, 6, 8, 10, 12, 14, 16, 18, 33 & 2, 4, 6, 8, 10, 12, 14, 16, 18, 23 \\
\addlinespace[2pt]
\(E2M\text{-}Last1^{\dagger}\) & 10 & 2, 3, 4, 5, 6, 7, 8, 9, 10, 18 & 2, 4, 6, 8, 10, 12, 14, 16, 18, 23 \\
\midrule
\multicolumn{4}{c}{\textit{Similarity-based Sampling}} \\
\midrule
Sim-1-Top  & 10 & 20, 21, 22, 23, 24, 25, 26, 27, 28, 29 & 2, 3, 4, 5, 6, 7, 8, 9, 10, 11 \\
\addlinespace[2pt]
Sim-1-Last & 10 & 1, 2, 3, 4, 5, 6, 7, 8, 9, 10 & 8, 9, 10, 11, 12, 13, 14, 15, 16, 17 \\
\addlinespace[2pt]
Sim-2-Top  & 10 & 4, 5, 6, 7, 24, 26, 27, 28, 29, 30 & 2, 3, 4, 5, 6, 7, 8, 9, 18, 19 \\
\addlinespace[2pt]
Sim-2-Last & 10 & 1, 2, 3, 11, 12, 14, 15, 16, 17, 32 & 0, 11, 12, 13, 14, 15, 16, 17, 22, 23 \\
\bottomrule
\end{tabular}
\caption{Detailed layer selection for multi-layer alignment. We show the aligned layer indices for both the student VLA visual backbone and the teacher VGGT model. For OpenVLA, we count the projector between vision tower and LLM as a layer so there are total 33 indices in our default setting. The selected layers are sorted in ascending order for both VLA and VGGT.
$^{*}$ For OpenVLA (32 layers, or 33 including the projector), we select layers uniformly using indices 2--18 plus the last layer. 
$^{\dagger}$ For PI0 and PI0.5 (18 layers), since layer 18 coincides with the last layer, we instead select indices 2--10 plus the last layer.}
\end{table*}

%% file: references.bib
@article{kim2024openvla,
  title   = {OpenVLA: An Open-Source Vision-Language-Action Model},
  author  = {Kim, Moo Jin and Pertsch, Karl and Karamcheti, Siddharth and Xiao, Ted and Balakrishna, Ashwin and Nair, Suraj and Rafailov, Rafael and Foster, Ethan and Lam, Grace and Sanketi, Pannag and Vuong, Quan and Kollar, Thomas and Burchfiel, Benjamin and Tedrake, Russ and Sadigh, Dorsa and Levine, Sergey and Liang, Percy and Finn, Chelsea},
  journal = {arXiv preprint arXiv:2406.09246},
  year    = {2024}
}

@article{Peebles2022DiT,
  title={Scalable Diffusion Models with Transformers},
  author={William Peebles and Saining Xie},
  year={2022},
  journal={arXiv preprint arXiv:2212.09748},
}

@InProceedings{pmlr-v229-zitkovich23a,
  title     = {RT-2: Vision-Language-Action Models Transfer Web Knowledge to Robotic Control},
  author    = {Zitkovich, Brianna and Joshi, Aviral and Brohan, Anthony and Brown, Noah and Chebotar, Yevgen and Cortes, Oscar and Ding, Tianli and Driess, Danny and Dubey, Avinava Kumar and Florence, Pete and Fu, Chuyuan and Gopalakrishnan, Keerthana and Han, Kehang and Hausman, Karol and Herzog, Alexander and Hsu, Jasmine and Ichter, Brian and Irpan, Alex and Kollar, Thomas and Lam, Grace and Levine, Sergey and Liang, Percy and Nair, Suraj and Narang, Yashraj and Pathak, Deepak and Radosavovic, Ilija and Raff, Edward and Rao, Kanishka and Rettinghouse, Zachary and Salazar, Julian and Sermanet, Pierre and Sievers, Tobias and Tan, Jie and Toussaint, Marc and Vanhoucke, Vincent and Vuong, Quan and Wulfmeier, Markus and Xiao, Ted and Finn, Chelsea and Guadarrama, Sergio},
  booktitle = {Proceedings of The 7th Conference on Robot Learning},
  pages     = {2165--2183},
  year      = {2023},
  editor    = {Tan, Jie and Toussaint, Marc and Darvish, Kourosh},
  volume    = {229},
  series    = {Proceedings of Machine Learning Research},
  month     = {06--09 Nov},
  publisher = {PMLR},
  url       = {https://proceedings.mlr.press/v229/zitkovich23a.html}
}

@InProceedings{pmlr-v164-shridhar22a,
  title     = {CLIPort: What and Where Pathways for Robotic Manipulation},
  author    = {Shridhar, Mohit and Manuelli, Lucas and Fox, Dieter},
  booktitle = {Proceedings of The 5th Conference on Robot Learning},
  pages     = {894--906},
  year      = {2022},
  editor    = {Faust, Aleksandra and Hsu, David and Neumann, Gerhard},
  volume    = {164},
  series    = {Proceedings of Machine Learning Research},
  month     = {08--11 Nov},
  publisher = {PMLR},
  url       = {https://proceedings.mlr.press/v164/shridhar22a.html}
}

@InProceedings{pmlr-v205-shridhar23a,
  title     = {Perceiver-Actor: A Multi-Task Transformer for Robotic Manipulation},
  author    = {Shridhar, Mohit and Manuelli, Lucas and Fox, Dieter},
  booktitle = {Proceedings of The 6th Conference on Robot Learning},
  pages     = {785--799},
  year      = {2023},
  editor    = {Liu, Karen and Kulic, Dana and Ichnowski, Jeff},
  volume    = {205},
  series    = {Proceedings of Machine Learning Research},
  month     = {14--18 Dec},
  publisher = {PMLR},
  url       = {https://proceedings.mlr.press/v205/shridhar23a.html}
}

@InProceedings{pmlr-v229-huang23b,
  title     = {VoxPoser: Composable 3D Value Maps for Robotic Manipulation with Language Models},
  author    = {Huang, Wenlong and Wang, Chen and Zhang, Ruohan and Li, Yunzhu and Wu, Jiajun and Fei-Fei, Li},
  booktitle = {Proceedings of The 7th Conference on Robot Learning},
  pages     = {2225--2240},
  year      = {2023},
  editor    = {Tan, Jie and Toussaint, Marc and Darvish, Kourosh},
  volume    = {229},
  series    = {Proceedings of Machine Learning Research},
  month     = {06--09 Nov},
  publisher = {PMLR}
}

@inproceedings{ranftl2021dpt,
  title     = {Vision Transformers for Dense Prediction},
  author    = {Ranftl, Ren{\'e} and Bochkovskiy, Alexey and Koltun, Vladlen},
  booktitle = {Proceedings of the IEEE/CVF International Conference on Computer Vision (ICCV)},
  pages     = {12179--12188},
  year      = {2021}
}

@article{yu2024representation,
  title={Representation alignment for generation: Training diffusion transformers is easier than you think},
  author={Yu, Sihyun and Kwak, Sangkyung and Jang, Huiwon and Jeong, Jongheon and Huang, Jonathan and Shin, Jinwoo and Xie, Saining},
  journal={arXiv preprint arXiv:2410.06940},
  year={2024}
}

@article{wang2024reconstructive,
  title={Reconstructive visual instruction tuning},
  author={Wang, Haochen and Zheng, Anlin and Zhao, Yucheng and Wang, Tiancai and Ge, Zheng and Zhang, Xiangyu and Zhang, Zhaoxiang},
  journal={arXiv preprint arXiv:2410.09575},
  year={2024}
}

@article{huang2025mllms,
  title={MLLMs Need 3D-Aware Representation Supervision for Scene Understanding},
  author={Huang, Xiaohu and Wu, Jingjing and Xie, Qunyi and Han, Kai},
  journal={arXiv preprint arXiv:2506.01946},
  year={2025}
}

@article{li2025spatial,
  title={Spatial Forcing: Implicit Spatial Representation Alignment for Vision-language-action Model},
  author={Li, Fuhao and Song, Wenxuan and Zhao, Han and Wang, Jingbo and Ding, Pengxiang and Wang, Donglin and Zeng, Long and Li, Haoang},
  journal={arXiv preprint arXiv:2510.12276},
  year={2025}
}

@article{lora2021,
  title={LoRA: Low-Rank Adaptation of Large Language Models},
  author={Hu, Edward J and Shen, Yelong and Wallis, Phillip and others},
  journal={arXiv preprint arXiv:2106.09685},
  year={2021}
}

@article{sun2019patient,
  title={Patient knowledge distillation for bert model compression},
  author={Sun, Siqi and Cheng, Yu and Gan, Zhe and Liu, Jingjing},
  journal={arXiv preprint arXiv:1908.09355},
  year={2019}
}

@article{gromov2024unreasonable,
  title={The unreasonable ineffectiveness of the deeper layers},
  author={Gromov, Andrey and Tirumala, Kushal and Shapourian, Hassan and Glorioso, Paolo and Roberts, Daniel A},
  journal={arXiv preprint arXiv:2403.17887},
  year={2024}
}

@article{he2024matters,
  title={What matters in transformers? not all attention is needed},
  author={He, Shwai and Sun, Guoheng and Shen, Zheyu and Li, Ang},
  journal={arXiv preprint arXiv:2406.15786},
  year={2024}
}

@inproceedings{chen2021distilling,
  title={Distilling knowledge via knowledge review},
  author={Chen, Pengguang and Liu, Shu and Zhao, Hengshuang and Jia, Jiaya},
  booktitle={Proceedings of the IEEE/CVF conference on computer vision and pattern recognition},
  pages={5008--5017},
  year={2021}
}

@article{kim2025fine,
  title={Fine-tuning vision-language-action models: Optimizing speed and success},
  author={Kim, Moo Jin and Finn, Chelsea and Liang, Percy},
  journal={arXiv preprint arXiv:2502.19645},
  year={2025}
}

@article{black2024pi_0,
  title={$\pi_0$: A Vision-Language-Action Flow Model for General Robot Control},
  author={Black, Kevin and Brown, Noah and Driess, Danny and Esmail, Adnan and Equi, Michael and Finn, Chelsea and Fusai, Niccolo and Groom, Lachy and Hausman, Karol and Ichter, Brian and Jakubczak, Szymon and Jones, Tim and Ke, Liyiming and Levine, Sergey and Li-Bell, Adrian and Mothukuri, Mohith and Nair, Suraj and Pertsch, Karl and Shi, Lucy Xiaoyang and Tanner, James and Vuong, Quan and Walling, Anna and Wang, Haohuan and Zhilinsky, Ury},
  journal={arXiv preprint arXiv:2410.24164},
  year={2024},
  url={https://arxiv.org/abs/2410.24164}
}

@inproceedings{black2025pi05,
  title={$\pi_{0.5}$: a Vision-Language-Action Model with Open-World Generalization},
  author={Black, Kevin and Brown, Noah and Darpinian, James and Dhabalia, Karan and Driess, Danny and Esmail, Adnan and Equi, Michael and Finn, Chelsea and Fusai, Niccolo and Galliker, Manuel Y. and Ghosh, Dibya and Groom, Lachy and Hausman, Karol and Ichter, Brian and Jakubczak, Szymon and Jones, Tim and Ke, Liyiming and LeBlanc, Devin and Levine, Sergey and Li-Bell, Adrian and Mothukuri, Mohith and Nair, Suraj and Pertsch, Karl and Ren, Allen Z. and Shi, Lucy Xiaoyang and Smith, Laura and Springenberg, Jost Tobias and Stachowicz, Kyle and Tanner, James and Vuong, Quan and Walke, Homer and Walling, Anna and Wang, Haohuan and Yu, Lili and Zhilinsky, Ury},
  booktitle={Proceedings of The 9th Conference on Robot Learning (CoRL)},
  series={Proceedings of Machine Learning Research},
  volume={305},
  pages={17--40},
  year={2025},
  publisher={PMLR}
}

@article{qu2025spatialvla,
  title={Spatialvla: Exploring spatial representations for visual-language-action model},
  author={Qu, Delin and Song, Haoming and Chen, Qizhi and Yao, Yuanqi and Ye, Xinyi and Ding, Yan and Wang, Zhigang and Gu, JiaYuan and Zhao, Bin and Wang, Dong and Li, Xuelong},
  journal={arXiv preprint arXiv:2501.15830},
  year={2025}
}

@article{bhat20253d,
  title={3D CAVLA: Leveraging Depth and 3D Context to Generalize Vision Language Action Models for Unseen Tasks},
  author={Bhat, Vineet and Lan, Yu-Hsiang and Krishnamurthy, Prashanth and Karri, Ramesh and Khorrami, Farshad},
  journal={arXiv preprint arXiv:2505.05800},
  year={2025}
}

@inproceedings{wang2025vggt,
  title={VGGT: Visual geometry grounded transformer},
  author={Wang, Jianyuan and Chen, Minghao and Karaev, Nikita and Vedaldi, Andrea and Rupprecht, Christian and Novotny, David},
  booktitle={Proceedings of the Computer Vision and Pattern Recognition Conference},
  pages={5294--5306},
  year={2025}
}

@article{lin2025depth,
  title={Depth anything 3: Recovering the visual space from any views},
  author={Lin, Haotong and Chen, Sili and Liew, Junhao and Chen, Donny Y and Li, Zhenyu and Shi, Guang and Feng, Jiashi and Kang, Bingyi},
  journal={arXiv preprint arXiv:2511.10647},
  year={2025}
}

@inproceedings{gong-etal-2025-beyond,
    title = "Beyond Logits: Aligning Feature Dynamics for Effective Knowledge Distillation",
    author = "Gong, Guoqiang  and
      Wang, Jiaxing  and
      Xu, Jin  and
      Xiang, Deping  and
      Zhang, Zicheng  and
      Shen, Leqi  and
      Zhang, Yifeng  and
      JunhuaShu, JunhuaShu  and
      ZhaolongXing, ZhaolongXing  and
      Chen, Zhen  and
      Liu, Pengzhang  and
      Zhang, Ke",
    editor = "Che, Wanxiang  and
      Nabende, Joyce  and
      Shutova, Ekaterina  and
      Pilehvar, Mohammad Taher",
    booktitle = "Proceedings of the 63rd Annual Meeting of the Association for Computational Linguistics (Volume 1: Long Papers)",
    month = jul,
    year = "2025",
    address = "Vienna, Austria",
    publisher = "Association for Computational Linguistics",
    url = "https://aclanthology.org/2025.acl-long.1125/",
    doi = "10.18653/v1/2025.acl-long.1125",
    pages = "23067--23077",
    ISBN = "979-8-89176-251-0"
}

@inproceedings{miles2024understanding,
  title={Understanding the role of the projector in knowledge distillation},
  author={Miles, Roy and Mikolajczyk, Krystian},
  booktitle={Proceedings of the AAAI Conference on Artificial Intelligence},
  volume={38},
  number={5},
  pages={4233--4241},
  year={2024}
}

@inproceedings{miles2024vkd,
  title={Vkd: Improving knowledge distillation using orthogonal projections},
  author={Miles, Roy and Elezi, Ismail and Deng, Jiankang},
  booktitle={Proceedings of the IEEE/CVF Conference on Computer Vision and Pattern Recognition},
  pages={15720--15730},
  year={2024}
}

@article{gao2019representation,
  title={Representation degeneration problem in training natural language generation models},
  author={Gao, Jun and He, Di and Tan, Xu and Qin, Tao and Wang, Liwei and Liu, Tie-Yan},
  journal={arXiv preprint arXiv:1907.12009},
  year={2019}
}

@article{sun2025geovla,
  title={Geovla: Empowering 3d representations in vision-language-action models},
  author={Sun, Lin and Xie, Bin and Liu, Yingfei and Shi, Hao and Wang, Tiancai and Cao, Jiale},
  journal={arXiv preprint arXiv:2508.09071},
  year={2025}
}

@article{wang2025pi,
  title={$\pi^{3}$: Permutation-Equivariant Visual Geometry Learning},
  author={Wang, Yifan and Zhou, Jianjun and Zhu, Haoyi and Chang, Wenzheng and Zhou, Yang and Li, Zizun and Chen, Junyi and Pang, Jiangmiao and Shen, Chunhua and He, Tong},
  journal={arXiv preprint arXiv:2507.13347},
  year={2025}
}

@inproceedings{zhai2023sigmoid,
  title={Sigmoid loss for language image pre-training},
  author={Zhai, Xiaohua and Mustafa, Basil and Kolesnikov, Alexander and Beyer, Lucas},
  booktitle={Proceedings of the IEEE/CVF international conference on computer vision},
  pages={11975--11986},
  year={2023}
}

@article{liu2023libero,
  title   = {LIBERO: Benchmarking Knowledge Transfer for Lifelong Robot Learning},
  author  = {Liu, Haoyi and Zhu, Yuke and Anand, Anirudh and Guo, Zichen and Stone, Peter and Levine, Sergey},
  journal = {arXiv preprint arXiv:2306.03310},
  year    = {2023}
}

@article{chen2025robotwin,
  title   = {RoboTwin 2.0: A Scalable Data Generator and Benchmark with Strong Domain Randomization for Robust Bimanual Robotic Manipulation},
  author  = {Chen, Tianxing and Chen, Zanxin and Chen, Baijun and Cai, Zijian and Liu, Yibin and Li, Zixuan and Liang, Qiwei and Lin, Xianliang and Ge, Yiheng and Gu, Zhenyu and others},
  journal = {arXiv preprint arXiv:2506.18088},
  year    = {2025}
}

@article{oquab2023dinov2,
  title={Dinov2: Learning robust visual features without supervision},
  author={Oquab, Maxime and Darcet, Timoth{\'e}e and Moutakanni, Th{\'e}o and Vo, Huy and Szafraniec, Marc and Khalidov, Vasil and Fernandez, Pierre and Haziza, Daniel and Massa, Francisco and El-Nouby, Alaaeldin and others},
  journal={arXiv preprint arXiv:2304.07193},
  year={2023}
}

@inproceedings{karamcheti2024prismatic,
  title={Prismatic vlms: Investigating the design space of visually-conditioned language models},
  author={Karamcheti, Siddharth and Nair, Suraj and Balakrishna, Ashwin and Liang, Percy and Kollar, Thomas and Sadigh, Dorsa},
  booktitle={Forty-first International Conference on Machine Learning},
  year={2024}
}

@article{touvron2023llama,
  title={Llama 2: Open foundation and fine-tuned chat models},
  author={Hugo Touvron and Louis Martin and Kevin Stone and Peter Albert and Amjad Almahairi and Yasmine Babaei and Nikolay Bashlykov and Soumya Batra and Prajjwal Bhargava and Shruti Bhosale and Dan Bikel and Lukas Blecher and Cristian Canton Ferrer and Moya Chen and Guillem Cucurull and David Esiobu and Jude Fernandes and Jeremy Fu and Wenyin Fu and Brian Fuller and Cynthia Gao and Vedanuj Goswami and Naman Goyal and Anthony Hartshorn and Saghar Hosseini and Rui Hou and Hakan Inan and Marcin Kardas and Viktor Kerkez and Madian Khabsa and Isabel Kloumann and Artem Korenev and Punit Singh Koura and Marie-Anne Lachaux and Thibaut Lavril and Jenya Lee and Diana Liskovich and Yinghai Lu and Yuning Mao and Xavier Martinet and Todor Mihaylov and Pushkar Mishra and Igor Molybog and Yixin Nie and Andrew Poulton and Jeremy Reizenstein and Rashi Rungta and Kalyan Saladi and Alan Schelten and Ruan Silva and Eric Michael Smith and Ranjan Subramanian and Xiaoqing Ellen Tan and Binh Tang and Ross Taylor and Adina Williams and Jian Xiang Kuan and Puxin Xu and Zheng Yan and Iliyan Zarov and Yuchen Zhang and Angela Fan and Melanie Kambadur and Sharan Narang and Aurelien Rodriguez and Robert Stojnic and Sergey Edunov and Thomas Scialom},
  journal={arXiv preprint arXiv:2307.09288},
  year={2023}
}

@inproceedings{perez2018film,
  title={FiLM: Visual reasoning with a general conditioning layer},
  author={Perez, Ethan and Strub, Florian and De Vries, Harm and Dumoulin, Vincent and Courville, Aaron},
  booktitle={Proceedings of the AAAI conference on artificial intelligence},
  volume={32},
  number={1},
  year={2018}
}

@article{beyer2024paligemma,
  title={PaliGemma: A versatile 3B VLM for transfer},
  author={Beyer, Lucas and Steiner, Andreas and Pinto, Andr{\'e} Susano and Kolesnikov, Alexander and Wang, Xiao and Salz, Daniel and Neumann, Maxim and Alabdulmohsin, Ibrahim and Tschannen, Michael and Bugliarello, Emanuele and Unterthiner, Thomas and Keysers, Daniel and Koppula, Skanda and Liu, Fangyu and Grycner, Adam and Gritsenko, Alexey and Houlsby, Neil and Kumar, Manoj and Rong, Keran and Eisenschlos, Julian and Kabra, Rishabh and Bauer, Matthias and Bo{\v{s}}njak, Matko and Chen, Xi and Minderer, Matthias and Voigtlaender, Paul and Bica, Ioana and Balazevic, Ivana and Puigcerver, Joan and Papalampidi, Pinelopi and Henaff, Olivier and Xiong, Xi and Soricut, Radu and Harmsen, Jeremiah and Zhai, Xiaohua},
  journal={arXiv preprint arXiv:2407.07726},
  year={2024}
}

@article{team2024gemma,
  title={Gemma: Open models based on gemini research and technology},
  author={Mesnard, Thomas and Hardin, Cassidy and Dadashi, Robert and Bhupatiraju, Surya and Pathak, Shreya and Sifre, Laurent and Rivi{\`e}re, Morgane and Kale, Mihir Sanjay and Love, Juliette and others},
  journal={arXiv preprint arXiv:2403.08295},
  year={2024}
}

@article{chi2025diffusion,
  title={Diffusion policy: Visuomotor policy learning via action diffusion},
  author={Chi, Cheng and Xu, Zhenjia and Feng, Siyuan and Cousineau, Eric and Du, Yilun and Burchfiel, Benjamin and Tedrake, Russ and Song, Shuran},
  journal={The International Journal of Robotics Research},
  volume={44},
  number={10-11},
  pages={1684--1704},
  year={2025},
  publisher={Sage Publications Sage UK: London, England}
}

@article{zheng2024tracevla,
  title={Tracevla: Visual trace prompting enhances spatial-temporal awareness for generalist robotic policies},
  author={Zheng, Ruijie and Liang, Yongyuan and Huang, Shuaiyi and Gao, Jianfeng and Daum{\'e} III, Hal and Kolobov, Andrey and Huang, Furong and Yang, Jianwei},
  journal={arXiv preprint arXiv:2412.10345},
  year={2024}
}

@inproceedings{ghosh2024octo,
  title={Octo: An open-source generalist robot policy},
  author={Ghosh, Dibya and Walke, Homer and Pertsch, Karl and Black, Kevin and Mees, Oier and Dasari, Sudeep and Hejna, Joey and Kreiman, Tobias and Xu, Charles and Luo, Jianlan and Tan, You Liang and Chen, Lawrence Yunliang and Sanketi, Pannag and Vuong, Quan and Xiao, Ted and Sadigh, Dorsa and Finn, Chelsea and Levine, Sergey},
  booktitle={Proceedings of Robotics: Science and Systems (RSS)},
  year={2024}
}

@article{hou2025dita,
  title={Dita: Scaling diffusion transformer for generalist vision-language-action policy},
  author={Hou, Zhi and Zhang, Tianyi and Xiong, Yuwen and Duan, Haonan and Pu, Hengjun and Tong, Ronglei and Zhao, Chengyang and Zhu, Xizhou and Qiao, Yu and Dai, Jifeng and Chen, Yuntao},
  journal={arXiv preprint arXiv:2503.19757},
  year={2025}
}

@article{bu2025univla,
  title={Univla: Learning to act anywhere with task-centric latent actions},
  author={Bu, Qingwen and Yang, Yanting and Cai, Jisong and Gao, Shenyuan and Ren, Guanghui and Yao, Maoqing and Luo, Ping and Li, Hongyang},
  journal={arXiv preprint arXiv:2505.06111},
  year={2025}
}

@article{guo2025glad,
  title={GLaD: Geometric Latent Distillation for Vision-Language-Action Models},
  author={Guo, Minghao and Cao, Meng and Tao, Jiachen and Xu, Rongtao and Yan, Yan and Liang, Xiaodan and Laptev, Ivan and Chang, Xiaojun},
  journal={arXiv preprint arXiv:2512.09619},
  year={2025}
}

@inproceedings{gr00tn1_2025,
  archivePrefix = {arxiv},
  eprint     = {2503.14734},
  title      = {{GR00T} {N1}: An Open Foundation Model for Generalist Humanoid Robots},
  author     = {Johan Bjorck and Fernando Castañeda, Nikita Cherniadev and Xingye Da and Runyu Ding and Linxi "Jim" Fan and Yu Fang and Dieter Fox and Fengyuan Hu and Spencer Huang and Joel Jang and Zhenyu Jiang and Jan Kautz and Kaushil Kundalia and Lawrence Lao and Zhiqi Li and Zongyu Lin and Kevin Lin and Guilin Liu and Edith Llontop and Loic Magne and Ajay Mandlekar and Avnish Narayan and Soroush Nasiriany and Scott Reed and You Liang Tan and Guanzhi Wang and Zu Wang and Jing Wang and Qi Wang and Jiannan Xiang and Yuqi Xie and Yinzhen Xu and Zhenjia Xu and Seonghyeon Ye and Zhiding Yu and Ao Zhang and Hao Zhang and Yizhou Zhao and Ruijie Zheng and Yuke Zhu},
  month      = {March},
  year       = {2025},
  booktitle  = {ArXiv Preprint},
}

@article{chen2025internvla,
  title={Internvla-m1: A spatially guided vision-language-action framework for generalist robot policy},
  author={Chen, Xinyi and Chen, Yilun and Fu, Yanwei and Gao, Ning and Jia, Jiaya and Jin, Weiyang and Li, Hao and Mu, Yao and Pang, Jiangmiao and Qiao, Yu and others},
  journal={arXiv preprint arXiv:2510.13778},
  year={2025}
}

@article{cen2025worldvla,
  title={WorldVLA: Towards Autoregressive Action World Model},
  author={Cen, Jun and Yu, Chaohui and Yuan, Hangjie and Jiang, Yuming and Huang, Siteng and Guo, Jiayan and Li, Xin and Song, Yibing and Luo, Hao and Wang, Fan and others},
  journal={arXiv preprint arXiv:2506.21539},
  year={2025}
}

@article{lin2025vote,
  title={Vote: vision-language-action optimization with trajectory ensemble voting},
  author={Lin, Juyi and Taherin, Amir and Akbari, Arash and Akbari, Arman and Lu, Lei and Chen, Guangyu and Padir, Taskin and Yang, Xiaomeng and Chen, Weiwei and Li, Yiqian and others},
  journal={arXiv preprint arXiv:2507.05116},
  year={2025}
}

@inproceedings{lee2025vlsi,
  title={VLsI: Verbalized Layers-to-Interactions from Large to Small Vision Language Models},
  author={Lee, Byung-Kwan and Hachiuma, Ryo and Wang, Yu-Chiang Frank and Ro, Yong Man and Wu, Yueh-Hua},
  booktitle={Proceedings of the Computer Vision and Pattern Recognition Conference},
  pages={29545--29557},
  year={2025}
}

@article{huang2025thinkact,
  title={Thinkact: Vision-language-action reasoning via reinforced visual latent planning},
  author={Huang, Chi-Pin and Wu, Yueh-Hua and Chen, Min-Hung and Wang, Yu-Chiang Frank and Yang, Fu-En},
  journal={arXiv preprint arXiv:2507.16815},
  year={2025}
}

@article{shi2025memoryvla,
  title={Memoryvla: Perceptual-cognitive memory in vision-language-action models for robotic manipulation},
  author={Shi, Hao and Xie, Bin and Liu, Yingfei and Sun, Lin and Liu, Fengrong and Wang, Tiancai and Zhou, Erjin and Fan, Haoqiang and Zhang, Xiangyu and Huang, Gao},
  journal={arXiv preprint arXiv:2508.19236},
  year={2025}
}

@article{chen2022improved,
  title={Improved feature distillation via projector ensemble},
  author={Chen, Yudong and Wang, Sen and Liu, Jiajun and Xu, Xuwei and de Hoog, Frank and Huang, Zi},
  journal={Advances in Neural Information Processing Systems},
  volume={35},
  pages={12084--12095},
  year={2022}
}

@article{kusupati2022matryoshka,
  title={Matryoshka representation learning},
  author={Kusupati, Aditya and Bhatt, Gantavya and Rege, Aniket and Wallingford, Matthew and Sinha, Aditya and Ramanujan, Vivek and Howard-Snyder, William and Chen, Kaifeng and Kakade, Sham and Jain, Prateek and others},
  journal={Advances in Neural Information Processing Systems},
  volume={35},
  pages={30233--30249},
  year={2022}
}

@article{skean2025layer,
  title={Layer by layer: Uncovering hidden representations in language models},
  author={Skean, Oscar and Arefin, Md Rifat and Zhao, Dan and Patel, Niket and Naghiyev, Jalal and LeCun, Yann and Shwartz-Ziv, Ravid},
  journal={arXiv preprint arXiv:2502.02013},
  year={2025}
}

@article{weinan2017proposal,
  title={A proposal on machine learning via dynamical systems},
  author={Weinan, Ee},
  journal={Communications in Mathematics and Statistics},
  volume={5},
  number={1},
  pages={1--11},
  year={2017},
  publisher={Springer Science and Business Media LLC}
}

@article{chang2017multi,
  title={Multi-level residual networks from dynamical systems view},
  author={Chang, Bo and Meng, Lili and Haber, Eldad and Tung, Frederick and Begert, David},
  journal={arXiv preprint arXiv:1710.10348},
  year={2017}
}

@article{hurst2024gpt,
  title={Gpt-4o system card},
  author={Hurst, Aaron and Lerer, Adam and Goucher, Adam P and Perelman, Adam and Ramesh, Aditya and Clark, Aidan and Ostrow, AJ and Welihinda, Akila and Hayes, Alan and Radford, Alec and others},
  journal={arXiv preprint arXiv:2410.21276},
  year={2024}
}

@article{aldaco2024aloha,
  title={Aloha 2: An enhanced low-cost hardware for bimanual teleoperation},
  author={Aldaco, Jorge and Armstrong, Travis and Baruch, Robert and Bingham, Jeff and Chan, Sanky and Draper, Kenneth and Dwibedi, Debidatta and Finn, Chelsea and Florence, Pete and Goodrich, Spencer and others},
  journal={arXiv preprint arXiv:2405.02292},
  year={2024}
}

@article{liu2024rdt,
  title={RDT-1B: a Diffusion Foundation Model for Bimanual Manipulation},
  author={Liu, Songming and Wu, Lingxuan and Li, Bangguo and Tan, Hengkai and Chen, Huayu and Wang, Zhengyi and Xu, Ke and Su, Hang and Zhu, Jun},
  journal={arXiv preprint arXiv:2410.07864},
  year={2024}
}

@article{fei2025libero,
  title={Libero-plus: In-depth robustness analysis of vision-language-action models},
  author={Fei, Senyu and Wang, Siyin and Shi, Junhao and Dai, Zihao and Cai, Jikun and Qian, Pengfang and Ji, Li and He, Xinzhe and Zhang, Shiduo and Fei, Zhaoye and others},
  journal={arXiv preprint arXiv:2510.13626},
  year={2025}
}

@article{cortes2012algorithms,
  title={Algorithms for learning kernels based on centered alignment},
  author={Cortes, Corinna and Mohri, Mehryar and Rostamizadeh, Afshin},
  journal={The Journal of Machine Learning Research},
  volume={13},
  number={1},
  pages={795--828},
  year={2012},
  publisher={JMLR. org}
}
